\newcommand{\pred}[1]{\boldsymbol{{1}}[#1]}
\newcommand{\C}{{\cal C}}
\renewcommand{\P}{\mathbb{P}}
\newcommand{\E}{\mathbb{E}}
\newcommand{\SP}{{\cal V}}
\renewcommand{\sp}{V} %
\newcommand{\teta}{\tilde\eta}
\newcommand{\tr}{\tilde r}
\newcommand{\X}{{\cal X}}
\newcommand{\Y}{{\cal Y}}
\newcommand{\N}{\mathbb{N}}
\newcommand{\R}{\mathbb{R}}
\newcommand{\beq}{\begin{eqnarray*}}
\newcommand{\eeq}{\end{eqnarray*}}
\newcommand{\beqn}{\begin{eqnarray}}
\newcommand{\eeqn}{\end{eqnarray}}
\newcommand{\abs}[1]{\left| #1 \right|}
\newcommand{\hide}[1]{}
\newcommand{\set}[1]{\left\{ #1 \right\}}
\newcommand{\eps}{\varepsilon}
\newcommand{\err}{\mathrm{err}}
\newcommand{\serr}{\widehat{\err}}
\newcommand{\oo}[1]{\frac{1}{#1}}
\newcommand{\argmin}{\mathop{\mathrm{argmin}}}
\newcommand{\argmax}{\mathop{\mathrm{argmax}}}
\newcommand{\dist}{\rho} %
\newcommand{\g}{\gamma}
\newcommand{\ddim}{\operatorname{ddim}}
\newcommand{\diam}{\operatorname{diam}}
\newcommand{\gn}{\, | \,}
\newtheorem{theorem}{Theorem}
\newtheorem{example}{Example}
\newtheorem{lemma}[theorem]{Lemma}
\newtheorem{remark}{Remark}
\newtheorem{proposition}[theorem]{Proposition}
\newcommand{\bepf}{\begin{proof}}
\newcommand{\enpf}{\end{proof}}
\newcommand{\tS}{S'}
\newcommand{\tY}{Y'}
\newcommand{\tbY}{{\bm Y}'}
\newcommand{\nn}{\text{nn}}
\renewcommand{\k}{m}
\renewcommand{\a}{\alpha}
\newcommand{\gnet}{{\bm X}}
\newcommand{\Ng}{\mathcal{N}_\g}
\newcommand{\UB}{\textrm{UB}}
\newcommand{\missmass}{L}
\newcommand{\Vor}{\mathcal{V}}
\newcommand{\cB}{\bar B} %
\newcommand{\appref}[1]{Appendix \ref{ap:#1}}
\newcommand{\nats}{\mathbb{N}}
\newcommand{\lemref}[1]{Lemma \ref{lem:#1}}
\newcommand{\thmref}[1]{Theorem \ref{thm:#1}}
\renewcommand{\algref}[1]{Alg.~\ref{alg:#1}}
\renewcommand{\eqref}[1]{Eq.~(\ref{eq:#1})}
\newcommand{\secref}[1]{Sec.~\ref{sec:#1}}
\newcommand{\exref}[1]{Example~\ref{ex:#1}}
\newcommand{\propref}[1]{Proposition~\ref{prop:#1}}
\newcommand{\fins}{Z_{0}}
\newcommand{\infs}{Z_{\infty}}
\newcommand{\anc}{\tilde x}
\newcommand{\ancb}{\tilde q}
\newcommand{\ca}[2]{{#1 \wedge #2}}
\newcommand{\seq}[1]{\g_{#1}}
\newcommand{\bmu}{{\bar{\mu}}}
\newcommand{\VIIa}{\V^{II_a}}
\newcommand{\VIIb}{\V^{II_b}}
\newcommand{\VIa}{\V^{I_a}}
\newcommand{\VIb}{\V^{I_b}}
\newcommand{\V}{\mathcal V}
\newcommand{\Tu}{\overline T}
\newcommand{\prefix}{\prec}
\newcommand{\algname}{\textup{{\textsf{KSU}}}}
\newcommand{\gkn}{\textup{{\textsf{GKN}}}}
\newcommand{\gkk}{\textup{{\textsf{GKK}}}}
\newcommand{\citet}[1]{\cite{#1}}
\newcommand{\citep}[1]{\cite{#1}}
\newcommand{\mydash}{\text{-}}
\newcommand{\ceil}[1]{\ensuremath{\left\lceil#1\right\rceil}}
\newcommand{\cl}{\operatorname{cl}}
\renewcommand{\gets}{:=}
\title{Nearest-Neighbor Sample Compression:\\ Efficiency, Consistency, Infinite Dimensions}
\author{
  Aryeh Kontorovich\thanks{Ben-Gurion University of the Negev}
\and
Sivan Sabato$^*$
\and
Roi Weiss\thanks{Weizmann Institute of Science}
}
\begin{document}

\maketitle

\begin{abstract}
We examine the Bayes-consistency of a recently proposed 1-nearest-neighbor-based multiclass learning algorithm.
This algorithm is derived from sample compression bounds and enjoys the statistical advantages of tight, fully empirical generalization bounds, as well as the algorithmic advantages of
a faster
runtime and memory savings.
We prove that this algorithm is strongly Bayes-consistent in metric spaces with finite doubling dimension --- the first consistency result for an efficient nearest-neighbor sample compression scheme.
Rather surprisingly, we discover that this algorithm continues to be Bayes-consistent
even in a certain infinite-dimensional setting, in which the basic measure-theoretic
conditions on which classic consistency proofs hinge are violated.
This is all the more surprising, since it is known that $k$-NN is not Bayes-consistent in this setting. We pose several challenging open problems for future research.
\end{abstract}

\section{Introduction}
\label{sec:introduction}
This paper deals with Nearest-Neighbor (NN) learning algorithms in metric spaces. Initiated by Fix and Hodges in 1951 \cite{FH1989}, this seemingly
naive
learning paradigm remains competitive against more sophisticated methods \cite{DBLP:conf/cvpr/BoimanSI08,DBLP:journals/jmlr/WeinbergerS09} and, in its celebrated $k$-NN version, has been placed on a solid theoretical foundation \cite{CoverHart67,stone1977,MR780746,MR877849}.

Although the classic 1-NN is well known to be inconsistent in general, in recent years a series of papers has presented variations on the theme of a {\em regularized} $1$-NN classifier, as an alternative to the Bayes-consistent $k$-NN.
Gottlieb et al. \cite{DBLP:journals/tit/GottliebKK14+colt} showed that approximate nearest neighbor search can act as a regularizer, actually improving generalization performance rather than just injecting noise.
In a follow-up work, \cite{kontorovich2014bayes} showed that applying Structural Risk Minimization to (essentially) the margin-regularized data-dependent bound in \cite{DBLP:journals/tit/GottliebKK14+colt} yields a strongly Bayes-consistent 1-NN classifier.
A further development has seen margin-based regularization analyzed through the lens of sample compression: a near-optimal nearest neighbor condensing algorithm was presented \cite{gknips14} and later extended to cover semimetric spaces \cite{gkn-jmlr17+aistats}; an activized version also appeared \cite{kontorovichsabatourner16}.
As detailed in \cite{kontorovich2014bayes}, margin-regularized 1-NN methods enjoy a number of statistical and computational advantages over the traditional $k$-NN classifier.
Salient among these are explicit data-dependent generalization bounds, and considerable runtime and memory savings.
Sample compression affords additional advantages, in the form of tighter generalization bounds and increased efficiency in time and space.

In this work we study the Bayes-consistency of a compression-based $1$-NN multiclass learning algorithm, in both finite-dimensional and infinite-dimensional metric spaces.
The algorithm is essentially the passive component of the active learner proposed by  Kontorovich, Sabato, and Urner in \cite{kontorovichsabatourner16}, and we refer to it in the sequel as \algname;
for completeness, we present it here in full (\algref{simple}).
We show that in finite-dimensional metric spaces, \algname\ is both computationally efficient and Bayes-consistent.
This is the first compression-based multiclass 1-NN algorithm proven to possess both of these properties.
We further exhibit a surprising phenomenon in infinite-dimensional spaces, where we construct a distribution for which \algname\ is Bayes-consistent while $k$-NN is not.

\paragraph{Main results.} Our main contributions consist of analyzing the performance of \algname\ in finite and infinite dimensional settings, and comparing it to the classical $k$-NN learner.
Our key findings are summarized below.
\begin{itemize}
\item In \thmref{comp-consist}, we show that \algname\ is computationally efficient and strongly Bayes-consistent in metric spaces with a finite doubling dimension.
This is the first (strong or otherwise) Bayes-consistency result for an efficient sample compression scheme for a multiclass (or even binary)\footnote{ An efficient sample compression algorithm was given in \cite{gknips14} for the binary case, but no Bayes-consistency guarantee is known for it.}
$1$-NN algorithm.
This result should be contrasted with the one in \cite{kontorovich2014bayes}, where margin-based regularization was employed, but not compression; the proof techniques from \cite{kontorovich2014bayes} do not carry over to the compression-based scheme. Instead, novel arguments are required, as we discuss below.
The new sample compression technique provides a Bayes-consistency proof for multiple (even countably many) labels; this is contrasted with the multiclass 1-NN algorithm in \cite{kontorovich2014maximum}, which is not compression-based, and requires solving a minimum vertex cover problem, thereby imposing a $2$-approximation factor whenever there are more than two labels. 

\item In \thmref{ksubayespreiss}, we make the surprising discovery   that \algname\ continues to be Bayes-consistent in a certain infinite-dimensional setting, even though this setting violates the basic measure-theoretic conditions on which classic consistency proofs hinge, including \thmref{comp-consist}.
This is all the more surprising, since it is known that $k$-NN is not Bayes-consistent for this construction \cite{cerou2006nearest}.
We are currently unaware of any separable\footnote{C{\'e}rou and Guyader    \cite{cerou2006nearest} gave a simple example of a nonseparable metric on which all known nearest-neighbor methods, including $k$-NN and \algname, obviously fail.}
metric probability space on which \algname\ fails to be Bayes-consistent; this is posed as an intriguing open problem.
\end{itemize}  

Our results indicate that in finite dimensions, an efficient, compression-based, Bayes-consistent multiclass 1-NN algorithm exists, and hence can be offered as an alternative to $k$-NN, which is well known to be Bayes-consistent in finite dimensions \cite{DBLP:journals/pami/Devroye81,shwartz2014understanding}.
In contrast, in infinite dimensions, our results show that the condition characterizing the Bayes-consistency of $k$-NN does not extend to all NN algorithms.
It is an open problem to characterize the necessary and sufficient conditions for the existence of a Bayes-consistent NN-based algorithm in infinite dimensions.

\paragraph{Related work.}
Following the pioneering work of \citet{CoverHart67} on nearest-neighbor classification, it was shown by \citet{MR780746,MR877849,devroye2013probabilistic} that the $k$-NN classifier is strongly Bayes consistent in $\R^d$.
These results made extensive use of the Euclidean structure of $\R^d$, but in \citet{shwartz2014understanding} a weak Bayes-consistency result was shown for metric spaces with a bounded diameter and a bounded doubling dimension, and additional distributional smoothness assumptions.
More recently, some of the classic results on $k$-NN risk decay rates were refined by \citet{DBLP:journals/corr/ChaudhuriD14} in an analysis that captures the interplay between the metric and the sampling distribution.
The worst-case rates have an exponential dependence on the dimension (i.e., the so-called
{\em curse of dimensionality}), and Pestov
\cite{MR1741506,MR3061714}
examines this phenomenon closely under various distributional and structural assumptions.

Consistency of NN-type algorithms in more general (and in particular infinite-dimensional)
metric spaces was discussed in
\cite{MR2327897,MR2235289,MR2654492,cerou2006nearest,MR1366756}.
In \cite{MR2327897,cerou2006nearest},
characterizations of Bayes-consistency were given in terms of
Besicovitch-type conditions (see \eqref{preissprop}).
In \cite{MR2327897}, a generalized
``moving window'' classification rule is used
and additional regularity conditions on the regression function are imposed.
The {\em filtering} technique (i.e., taking the first $d$ coordinates
in some basis representation) was shown to be universally consistent in
\cite{MR2235289}.
However, that algorithm suffers from the cost of cross-validating over {\em both}
the dimension $d$ and number of neighbors $k$.
Also, the technique is only applicable in Hilbert spaces
(as opposed to more general metric spaces) and
provides only asymptotic consistency,
without finite-sample bounds
such as those provided by
\algname. 
The insight of \cite{MR2235289}
is extended to the more general Banach spaces in
\cite{MR2654492} under various regularity assumptions.

None of the aforementioned generalization results for NN-based techniques
are in the form of fully empirical,
explicitly computable sample-dependent error bounds.
Rather, they are stated in terms of the unknown Bayes-optimal rate,
and some involve additional parameters quantifying the well-behavedness of the unknown distribution
(see \cite{kontorovich2014bayes} for a detailed discussion).
As such, these guarantees do not enable a practitioner to compute a numerical generalization error estimate for a given training sample, much less allow for a data-dependent selection of $k$, which must be tuned via cross-validation.
The asymptotic expansions in \citet{MR1635410,335893,hall2005,samworth2012}
likewise do not provide a computable finite-sample bound.
The quest for such bounds was a key motivation behind the series of works \cite{DBLP:journals/tit/GottliebKK14+colt,kontorovich2014maximum,gknips14}, of which \algname\ \cite{kontorovichsabatourner16} is the latest development.

The work of Devroye et al.~\cite[Theorem 21.2]{devroye2013probabilistic} has implications for $1$-NN classifiers in $\R^d$ that are defined based on data-dependent majority-vote partitions of the space.
It is shown that under some conditions, a fixed mapping from each sample size to a data-dependent partition rule induces a strongly Bayes-consistent algorithm. This result requires the partition rule to have a bounded VC dimension, and since this rule must be fixed in advance, the algorithm is not fully adaptive.
Theorem 19.3 ibid. proves weak consistency for an inefficient compression-based algorithm, which selects among all the possible compression sets of a certain size, and maintains a certain rate of compression relative to the sample size. 
The generalizing power of sample compression was independently discovered by \citet{warmuth86}, and later elaborated upon by \citet{graepel2005pac}.
In the context of NN classification, \citet{devroye2013probabilistic} lists various condensing heuristics (which have no known performance guarantees) and leaves open the algorithmic question of how to minimize the empirical risk over all subsets of a given size.

The first compression-based 1-NN algorithm with provable optimality guarantees was given in \cite{gknips14}; it was based on constructing $\gamma$-nets in spaces with a finite doubling dimension.
The compression size of this construction was shown to be nearly unimprovable by an efficient algorithm unless P=NP.
With $\g$-nets as its algorithmic engine, \algname\ inherits this near-optimality.
The compression-based $1$-NN paradigm was later extended to semimetrics in \cite{gkn-jmlr17+aistats}, where it was shown to survive violations of the triangle inequality, while the hierarchy-based search methods that have become standard for metric spaces (such as \cite{BKL06,DBLP:journals/tit/GottliebKK14+colt} and related approaches) all break down.

It was shown in \cite{kontorovich2014bayes} that a margin-regularized $1$-NN learner (essentially, the one proposed in \cite{DBLP:journals/tit/GottliebKK14+colt}, which, unlike \cite{gknips14}, did not involve sample compression) becomes strongly Bayes-consistent when the margin is chosen optimally in an explicitly prescribed sample-dependent fashion.
The margin-based technique developed in \cite{DBLP:journals/tit/GottliebKK14+colt} for the binary case was extended to multiclass in \cite{kontorovich2014maximum}.
Since the algorithm relied on computing a minimum vertex cover, it was not possible to make it both computationally efficient and Bayes-consistent when the number of lables exceeds two.
An additional improvement over \cite{kontorovich2014maximum} is that the generalization bounds presented there had an explicit (logarithmic) dependence on the number of labels, while our compression scheme extends seamlessly to countable label spaces.

\paragraph{Paper outline.} 
After fixing the notation and setup in \secref{setting}, in \secref{compression_scheme} we present \algname, the compression-based 1-NN algorithm we analyze in this work.
\secref{main_results} discusses our main contributions regarding \algname, together with some open problems. High-level proof sketches are given in \secref{consistency_proof} for the finite-dimensional case, and \secref{infinite} for the infinite-dimensional case. Full detailed proofs are found in the appendices.

\section{Setting and Notation}
\label{sec:setting}
Our instance space is the metric space $(\X,\dist)$, where $\X$ is the instance domain and $\rho$ is the metric.
(See \appref{metric_space_basics} for relevant background on metric measure spaces.)
We consider a countable label space $\Y$.
The unknown sampling distribution is a probability measure $\bmu$ over $\X\times\Y$, with marginal $\mu$ over $\X$.
Denote by $(X,Y) \sim \bmu$ a pair drawn according to $\bmu$. 
The generalization error of a classifier $f:\X \rightarrow \Y$ is given by $\err_\bmu(f) := \P_\bmu (Y \neq f(X))$, and its empirical error  with respect to a labeled set $\tS\subseteq \X \times \Y$ is given by $\serr(f, \tS) := \oo{|\tS|}\sum_{(x,y)\in \tS} \pred{y \neq f(x)}.$
The optimal Bayes risk of $\bmu$ is $R^*_\bmu := \inf \err_{\bmu}(f),$ where the infimum is taken over all measurable classifiers $f:\X \rightarrow \Y$.
We say that $\bmu$ is \emph{realizable} when $R^*_\bmu = 0$.
We omit the overline in $\bmu$ in the sequel when there is no ambiguity.

For a finite labeled set $S\subseteq \X\times\Y$ and any $x\in\X$, let $X_{\nn}(x,S)$ be the nearest neighbor of $x$ with respect to $S$ and let $Y_{\nn}(x,S)$ be the nearest neighbor label of $x$ with respect to $S$:
\beq
(X_{\nn}(x,S),Y_{\nn}(x,S)) := \argmin_{(x',y')\in S} \dist(x, x'),
\eeq
where ties are broken arbitrarily.
The 1-NN classifier induced by $S$ is denoted by $h_{S}( x ) := Y_{\nn}(x,S)$.
The set of points in $S$, denoted by $\bm X = \{X_1,\ldots,X_{|S|}\} \subseteq \X$, induces a {\em Voronoi partition} of $\X$, $\Vor(\bm X) := \{V_1(\bm X),\dots, V_{|S|}(\bm X)\}$, where each Voronoi cell is 
\mbox{$V_i(\bm X) := \{ x \in \X : \argmin_{j \in \{1,\dots,|S|\}} \rho(x,X_j) = i \}$.}
By definition, $\forall x\in V_i(\bm X)$, $h_S(x) = Y_i$.

A 1-NN algorithm is a mapping from an i.i.d.~labeled sample $S_n \sim \bmu^n$ to a labeled set $\tS_n \subseteq \X \times \Y$, yielding the 1-NN classifier $h_{\tS_n}$. 
While the classic 1-NN algorithm sets $\tS_n \gets S_n$, in this work we study a compression-based algorithm which sets $\tS_n$ adaptively, as discussed further below.

A 1-NN algorithm is {\em strongly} Bayes-consistent on $\bmu$ if $\err(h_{\tS_n})$ converges to $R^*$ almost surely, that is $\P[\lim_{n \rightarrow \infty} \err(h_{\tS_n}) = R^*] = 1$. 
An algorithm is {\em weakly} Bayes-consistent on $\bmu$ if $\err(h_{\tS_n})$ converges to $R^*$ in expectation, $\lim_{n\to\infty}\E[\err(h_{\tS_n})]=R^*$.
Obviously, the former implies the latter. We say that an algorithm is Bayes-consistent on a metric space if it is Bayes-consistent on all distributions in the metric space. 

A convenient property that is used when studying the Bayes-consistency of algorithms in metric spaces is the \emph{doubling dimension}. 
Denote the open ball of radius $r$ around $x$ by $B_r(x) := \{x'\in\X : \dist(x, x')<r \}$ and let $\cB_r(x)$ denote the corresponding closed ball.
The doubling dimension of a metric space $(\X,\rho)$ is defined as follows. Let $n$ be the smallest number such that every ball in $\X$ can be covered by $n$ balls of half its radius, where all balls are centered at points of $\X$.
Formally, 
\[
n := 
\min\{n\in\nats:
\forall x\in\X,r>0, \;\; \exists x_1,\ldots,x_n\in\X \text{ s.t. } B_{r}(x) \subseteq \cup_{i=1}^n B_{r/2}(x_i)
\}.
\]
Then the doubling dimension of $(\X,\rho)$ is defined by $\ddim(\X,\rho) :=\log_2n$. 

For an integer $n$, let $[n] := \{1,\ldots,n\}$.
Denote the set of all index vectors of length $d$ by $I_{n,d} := [n]^d.$
Given a labeled set $S_n = (X_i,Y_i)_{i \in [n]}$ and any $\bm i = \{i_1,\ldots,i_d\} \in I_{n,d}$, denote the sub-sample of $S_n$ indexed by $\bm i$ by $S_n(\bm i) := \{(X_{i_1}, Y_{i_1}), \dots,(X_{i_d}, Y_{i_d})\}$.
Similarly, for a vector $\tbY = \{\tY_1,\ldots,\tY_d\} \in \Y^d$, denote by $S_n{(\bm i, \tbY)} := \{(X_{i_1}, \tY_{1}), \dots,(X_{i_d}, \tY_{d})\}$, namely the sub-sample of $S_n$ as determined by $\bm i$ where the labels are replaced with $\tbY$.
Lastly, for $\bm i,\bm j \in I_{n,d}$, we denote $S_n(\bm i; \bm j) := \{(X_{i_1}, Y_{j_1}), \dots,(X_{i_d}, Y_{j_d})\}.$

\section{1-NN majority-based compression}
\label{sec:compression_scheme}
In this work we consider the 1-NN majority-based compression algorithm proposed in \cite{kontorovichsabatourner16}, which we refer to as \algname.
This algorithm is based on constructing \emph{$\g$-nets} at different scales; for $\g > 0$ and $A \subseteq \X$, a set $\gnet \subseteq A$ is said to be a $\g$-net of $A$ if $\forall a\in A,\exists x\in\gnet:~ \dist(a,x)\leq\g$ and for all $x\neq x'\in\gnet$, $\dist(x,x')>\g$.%
\footnote{
   For technical reasons, having to do with the construction in \secref{infinite}, we depart slightly from the standard definition of a $\g$-net $\gnet \subseteq A$.
The classic definition requires that (i) $\forall a\in A,\exists x\in\gnet:~ \dist(a,x)<\g$ and (ii) $\forall x\neq x'\in \gnet:~ \dist(x,x')\ge\g$.
In our definition, the relations $<$ and $\ge$ in (i) and (ii) are
replaced by
$\le$ and $>$.
}
 
The algorithm (see \algref{simple}) operates as follows. 
Given an input sample $S_n$, whose set of points is denoted $\bm X_n = \{X_1,\ldots,X_n\}$, \algname\ considers all possible scales $\g > 0$.
For each such scale it constructs a $\g$-net of $\bm X_n$. Denote this $\g$-net by $\bm X(\g) := \{X_{i_1},\ldots,X_{i_{\k}}\}$, where $\k \equiv \k(\g)$ denotes its size and $\bm i \equiv \bm i(\g) := \{i_1,\ldots,i_{\k}\} \in I_{n,\k}$ denotes the indices selected from $S_n$ for this $\g$-net.
For every such $\g$-net, the algorithm attaches the labels $\tbY \equiv \tbY(\g)\in \Y^{\k}$, which are the empirical majority-vote labels in the respective Voronoi cells in the partition $\Vor(\bm X(\g)) = \{V_1,\ldots,V_{\k}\}$.
Formally, for $i \in [\k]$, 
\beqn\label{eq:maj}
\tY_i \in \argmax_{y \in \Y} |\{ j \in [n] \mid X_j \in V_{i}, Y_j = y\}|,
\eeqn
where ties are broken arbitrarily.
This procedure creates a labeled set $\tS_n(\g) := S_n(\bm i (\g), \tbY(\g))$ for every relevant $\g \in \{ \rho(X_i,X_j) \mid i,j \in [n]\} \setminus \{0\}.$
The algorithm then selects a single $\g$, denoted $\g^*\equiv \g^*_n$, and outputs $h_{\tS_n(\g^*)}$. The scale $\g^*$ is selected so as to minimize a generalization error bound, which upper bounds $\err(\tS_n(\g))$ with high probability. This error bound, denoted $Q$ in the algorithm, can be derived using a compression-based analysis, as described below. 

\begin{algorithm}[h]
\caption{\algname: 1-NN compression-based algorithm}
\label{alg:simple} 
\begin{algorithmic}[1]  
\Require Sample $S_n = (X_i, Y_i)_{i\in[n]}$, confidence $\delta$
\Ensure A 1-NN classifier
\State Let $\Gamma\gets\set{\rho(X_i,X_j) \mid i,j \in [n]} \setminus \{0\}$
\For{$\g \in \Gamma$}
\State Let $\bm X(\g)$ be a $\g$-net of $\{X_1,\ldots,X_n\}$
\State Let $\k(\g) \gets |\bm X(\g)|$
\State For each $i \in [\k(\g)]$, let $\tY_i$ be the majority label in $V_i(\bm X(\g))$ as defined in \eqref{maj}
\State Set $\tS_n(\g) \gets (\bm X(\g), \tbY(\g))$
\EndFor
\State Set $\a(\g) \gets \serr(h_{\tS_n(\g)}, S_n)$
\State Find $\g^*_n \in  \argmin_{\g\in\Gamma} Q(n,\a(\g), 2\k(\g), \delta)$, where $Q$ is, e.g.,  as in \eqref{KSUbound}
\State Set $\tS_n \gets \tS_n({\g^*_n})$
\State \Return $h_{\tS_n}$   
\end{algorithmic}
\end{algorithm}

We say that a mapping $S_n \mapsto \tS_n$ is a \emph{compression scheme} if there is a function $\C:\cup_{m = 0}^\infty (\X \times \Y)^m \rightarrow 2^{\X \times \Y}$, from sub-samples to subsets of $\X \times \Y$, such that for every $S_n$ there exists an $m$ and a sequence $\bm i \in I_{n,m}$ such that $\tS_n = \C(S_n(\bm i))$.
Given a compression scheme $S_n \mapsto \tS_n$ and a matching function $\C$, we say that a specific $\tS_n$ is an \emph{$(\alpha,m)$-compression} of a given $S_n$ if $\tS_n = \C(S_n(\bm i))$ for some $\bm i \in I_{n,m}$ and $\serr(h_{\tS_n},S_n) \leq \alpha$.
The generalization power of compression was recognized by \cite{floyd1995sample} and \cite{graepel2005pac}.
Specifically, it was
shown in \cite[Theorem 8]{gkn-jmlr17+aistats} that if the mapping $S_n \mapsto \tS_n$ is a compression scheme, then with probability at least $1-\delta$, for any $\tS_n$ which is an $(\alpha,m)$-compression of $S_n \sim \bmu^n$, we have
(omitting the constants, explicitly provided therein,
which do not affect our analysis)
\begin{equation}\label{eq:KSUbound}
\err(h_{\tS_n}) \leq  \frac{n}{n-m}\alpha + O(\frac{m\log(n) + \log(1/\delta)}{n-m}) + O(\sqrt{\frac{\frac{nm}{n-m}\alpha \log(n) + \log(1/\delta)}{n-m}}).
\end{equation}
Defining $Q(n,\a,m,\delta)$ as the RHS of \eqref{KSUbound}
provides \algname\ with a compression bound.
The following proposition shows that \algname\ is a compression scheme, which enables us to use \eqref{KSUbound} with the appropriate substitution.%
\footnote{ In \cite{kontorovichsabatourner16} the analysis was based on compression with side information, and does not extend to infinite $\Y$.}

\begin{proposition}
\label{prop:compression-scheme}
The mapping $S_n \mapsto \tS_n$ defined by \algref{simple} is a compression scheme whose output $\tS_n$ is a $(\serr(h_{\tS_n}), 2|\tS_n|)$-compression of $S_n$.
\end{proposition}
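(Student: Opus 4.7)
The plan is to exhibit an explicit compression function $\C$ and an index vector $\bm i' \in I_{n,2m}$ (with $m = |\tS_n|$) such that $\tS_n = \C(S_n(\bm i'))$, and then observe that the empirical error bookkeeping is automatic from the definition of $\alpha$. The essential insight is that although the labels in $\tS_n$ are majority-vote labels that are not a priori among the sample labels at the $\gamma$-net positions, each such majority label is, by definition, realized by at least one sample point inside the corresponding Voronoi cell. So we can use $m$ indices to encode the net positions and another $m$ indices to encode the labels, giving total compression size $2m$.

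Concretely, write $\tS_n = \tS_n(\g^*_n) = (\bm X(\g^*_n), \tbY(\g^*_n))$ with $\bm X(\g^*_n) = \{X_{i_1},\ldots,X_{i_m}\}$ and $\tbY(\g^*_n) = (\tY_1,\ldots,\tY_m)$. For each $k \in [m]$, let $j_k \in [n]$ be any index such that $X_{j_k} \in V_k(\bm X(\g^*_n))$ and $Y_{j_k} = \tY_k$. Such a $j_k$ exists because $V_k$ contains at least the net point $X_{i_k}$ (so the cell is nonempty) and $\tY_k$ is defined in \eqref{eq:maj} as a label attaining the maximum frequency among the sample labels in $V_k$, hence realized by at least one sample point in the cell. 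Form the concatenated index vector $\bm i' := (i_1,\ldots,i_m,j_1,\ldots,j_m) \in I_{n,2m}$; note that repetitions are allowed (indeed, we may take $j_k = i_k$ whenever $Y_{i_k} = \tY_k$).

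Next, define the reconstruction map $\C$, uniformly in $m$ and independent of $S_n$, by declaring that on an input tuple $((x_1,y_1),\ldots,(x_{2m},y_{2m})) \in (\X \times \Y)^{2m}$ of even length, $\C$ outputs the set $\{(x_k, y_{m+k}) : k \in [m]\}$ (with some arbitrary convention, e.g., $\emptyset$, on odd-length tuples, which are never used). Applied to the ordered tuple $S_n(\bm i') = ((X_{i_1},Y_{i_1}),\ldots,(X_{i_m},Y_{i_m}),(X_{j_1},Y_{j_1}),\ldots,(X_{j_m},Y_{j_m}))$, this yields $\{(X_{i_k}, Y_{j_k}) : k \in [m]\} = \{(X_{i_k}, \tY_k) : k \in [m]\} = \tS_n$ by construction. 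This verifies both that $S_n \mapsto \tS_n$ is a compression scheme in the sense of the definition given in the excerpt, and that $\tS_n$ is realized with compression size $|\bm i'| = 2m = 2|\tS_n|$. The $\alpha$-component of the $(\alpha, 2|\tS_n|)$-compression claim is tautological: one sets $\alpha := \serr(h_{\tS_n}, S_n)$, which satisfies $\serr(h_{\tS_n}, S_n) \leq \alpha$ with equality.

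The only nontrivial point to be careful with is that $\C$ must be a single fixed function, not allowed to peek at $S_n$ or at $\g^*_n$; however the reconstruction above depends only on the ordered tuple it receives, so this is unproblematic. I do not anticipate a serious obstacle in the proof; the conceptual content is entirely carried by the label-witness trick, and once that is in place the remainder is syntactic unwinding of the definitions.
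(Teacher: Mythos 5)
Your proposal is correct and takes essentially the same approach as the paper: both encode the $m$ net points with $m$ indices and the majority-vote label witnesses with another $m$ indices, and use the fixed reconstruction map that pairs the $k$-th point with the $(m+k)$-th label. Your write-up is, if anything, a bit more careful than the paper's (which uses the notation $S_n(\bm i; \bm j)$ somewhat loosely where a concatenated index vector of length $2m$ is really meant), and you correctly flag the one subtlety — that $\C$ must be fixed in advance and cannot depend on $S_n$ or $\g^*_n$ — which the paper's proof leaves implicit.
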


\begin{proof}
Define the function $\C$ by $\C((\bar{X}_i,\bar{Y}_i)_{i \in [2m]}) = (\bar{X}_i,\bar{Y}_{i + m})_{i \in [m]}$, and observe that for all $S_n$, we have $\tS_n = \C(S_n(\bm i(\g) ; \bm j(\g)))$, where $\bm i(\g)$ is the $\g$-net index set as defined above, and $\bm j(\g) = \{j_1,\ldots,j_{\k(\g)}\} \in I_{n, \k(\g)}$ is some index vector such that $\tY_i = Y_{j_i}$ for every $i \in [\k(\g)]$.
Since $\tY_i$ is an empirical majority vote, clearly such a $\bm j$ exists.
Under this scheme, the output $\tS_n$ of this algorithm is a $(\serr(h_{\tS_n}), 2|\tS_n|)$-compression.
\end{proof}

\algname\ is efficient, for any countable $\Y$.
Indeed, \algref{simple} has a naive runtime complexity of $O(n^4)$, since $O(n^2)$ values of $\g$ are considered and a $\g$-net is constructed for each one in time $O(n^2)$ (see \cite[Algorithm 1]{gknips14}). Improved runtimes can be obtained, e.g., using the methods in   \cite{KL04,DBLP:journals/tit/GottliebKK14+colt}.
In this work we focus on the Bayes-consistency of \algname, rather than optimize its computational complexity.
Our Bayes-consistency results below hold for \algname, whenever the generalization bound $Q(n,\alpha,m,\delta_n)$ satisfies the following properties:
\begin{enumerate}[label=\textbf{Property \arabic*},leftmargin=*]
\item  For any integer $n$ and $\delta \in (0,1)$, with probability $1 - \delta$ over the i.i.d.\! random sample $S_n \sim \bmu^n$, for all $\a \in [0,1]$ and $\k  \in [n]$:
If $\tS_n$ is an $(\a, \k)$-compression of $S_n$, then $\err(h_{\tS_n}) \leq Q(n,\alpha, m,\delta).$
\item $Q$ is monotonically increasing in $\a$ and in $\k$.
\item There is a sequence $\{\delta_n\}_{n = 1}^\infty$, $\delta_n \in (0,1)$ such that $\sum_{n=1}^\infty \delta_n < \infty$ and for all $m$, 
\[
\lim_{n\rightarrow \infty} \sup_{\a\in [0,1]} (Q(n,\a,m,\delta_n) - \a) = 0.
\]
\end{enumerate}
The compression bound in \eqref{KSUbound} clearly satisfies these properties. Note that Property 3 is satisfied by \eqref{KSUbound} using any convergent series $\sum_{n=1}^\infty \delta_n < \infty$ such that $\delta_n= e^{-o(n)}$;
in particular, the decay of $\delta_n$ cannot be too rapid.

\section{Main results}
\label{sec:main_results}
In this section we describe our main results. The proofs appear in subsequent sections.
First, we show that \algname\ is Bayes-consistent if the instance space has a finite doubling dimension. This contrasts with classical 1-NN, which is only Bayes-consistent if the distribution is realizable. 

\begin{theorem}
\label{thm:comp-consist}
Let $(\X,\rho)$ be a metric space with a finite doubling-dimension.
Let $Q$ be a generalization bound that satisfies Properties $1\mydash3$, and let $\delta_n$ be as stipulated by Property $3$ for $Q$. If the input confidence $\delta$ for input size $n$ is set to $\delta_n$, then the 1-NN classifier $h_{\tS_n({\g^*_n})}$ calculated by \algname\ is strongly Bayes consistent on $(\X,\rho)$:
\(
\P(\lim_{n \rightarrow \infty} \err(h_{\tS_n}) = R^*) = 1. 
\)
\end{theorem}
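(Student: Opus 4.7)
My plan is to compare \algname's output against a family of witness scales $\g_\eps$, indexed by a tolerance $\eps>0$, and then let $\eps\downarrow 0$. Because $\g^*_n$ minimizes $Q$ over $\Gamma$, monotonicity (Property~2) forces
\[
Q(n,\a(\g^*_n),2\k(\g^*_n),\delta_n) \;\le\; Q(n,\a(\g_\eps),2\k(\g_\eps),\delta_n),
\]
so bounding the right-hand side for a well-chosen witness is enough. Property~1 together with $\sum_n \delta_n < \infty$ and Borel--Cantelli then lifts the deterministic inequality into an almost-sure bound on $\err(h_{\tS_n})$, and Property~3 collapses the right-hand side once the limiting behavior of $\a(\g_\eps)$ and $\k(\g_\eps)$ is understood.

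\emph{Witness construction.} Fix $\eps>0$ and pick $x_0\in\X$, $R=R(\eps)$ with $\X_\eps := B_R(x_0)$ satisfying $\mu(\X\setminus\X_\eps)<\eps$; truncation is needed because doubling covers only yield bounded net sizes on sets of bounded diameter. Using the Lebesgue--Besicovitch differentiation theorem---applicable to every Borel probability measure on a space of finite doubling dimension---I would choose a fixed scale $\g_\eps>0$ small enough that, on a subset of $\X_\eps$ of $\mu$-mass at least $1-\eps$, the conditional distribution $\P[Y\in\cdot\gn X\in B_{\g_\eps}(x)]$ is within total-variation $\eps$ of the regression function at $x$. The population majority classifier associated with any $\g_\eps$-Voronoi partition of $\X_\eps$ then has true error $\le R^*+c\eps$ for an absolute constant $c$, and doubling bounds the number of net centers inside $\X_\eps$ by a constant $M_\eps$ depending only on $\eps$ and $\ddim(\X,\rho)$.

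\emph{Empirical control and assembly.} Round $\g_\eps$ to the nearest element of $\Gamma$; the $\g_\eps$-net $\bm X(\g_\eps)$ that \algname\ constructs splits into at most $M_\eps$ core centers inside $\X_\eps$ and a residual of centers needed to cover $\bm X_n\setminus\X_\eps$, whose empirical fraction is $\le 2\eps$ with probability $1-e^{-\Omega(n)}$. Inside $\X_\eps$ each core Voronoi cell receives $\Theta(n/M_\eps)$ samples, so a Chernoff plus union bound over the $M_\eps$ cells forces the empirical majority labels $\tY_i$ to match the population majorities for all large $n$ a.s.; combined with the $\le 2\eps$ residual contribution, this gives $\a(\g_\eps)\to R^*+O(\eps)$ almost surely. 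Plugging into the algorithmic inequality above, Borel--Cantelli makes it hold for all sufficiently large $n$ a.s., and Property~3 with the (bounded) core net size reduces the right-hand side to $\a(\g_\eps)+o(1)\le R^*+c\eps+o(1)$ a.s. Taking $\eps$ through a countable sequence tending to $0$ yields $\limsup_n \err(h_{\tS_n})\le R^*$ almost surely, which is strong Bayes-consistency.

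\emph{Main obstacle.} The hard part will be reconciling the possibly unbounded ambient space with Property~3, which handles only a \emph{fixed} compression budget $m$ as $n\to\infty$. One must show that the sample-dependent residual centers covering $\X\setminus\X_\eps$ are absorbed into the $O(\eps)$-error slack without inflating the effective $m$ passed to $Q$---for instance by exploiting the explicit form in \eqref{KSUbound} and letting $\eps$ shrink with $n$ sufficiently slowly, or by replacing the fixed scale with an $n$-dependent one and tracking both the net size and the majority-vote concentration simultaneously. This joint empirical-label and compression-size control is the genuinely new ingredient relative to the margin-based consistency proof of \cite{kontorovich2014bayes}, which trained on original labels and therefore sidestepped it.
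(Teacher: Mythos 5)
Your overall plan mirrors the paper's: fix a witness scale, exploit that $\g^*_n$ minimizes $Q$ (via Property~2), lift deterministic bounds to almost-sure ones via Property~1 and Borel--Cantelli, and collapse the excess via Property~3. Using the Lebesgue--Besicovitch differentiation theorem in place of the paper's $L_1$-density of $C_c(\X)$ (\lemref{dense_cont}) is a legitimate alternative for the approximation step, though to get uniformity over a set of $\mu$-mass $1-\eps$ you would additionally need Egorov; the paper obtains the analogous uniformity via equicontinuity of the $C_c$ approximants, which is arguably cleaner.

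However, there are two genuine gaps beyond the one you flag.

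\textbf{Empirical majorities need not match population majorities.} You write that ``a Chernoff plus union bound over the $M_\eps$ cells forces the empirical majority labels $\tY_i$ to match the population majorities for all large $n$ a.s.'' This is false in general: any cell whose conditional label distribution is arbitrarily close to a tie will have its empirical majority flip with probability bounded away from zero no matter how large $n$ is, and such cells cannot be excluded without extra distributional assumptions. The paper does not need (and never asserts) this matching. Instead it observes that the algorithm's empirical majority \emph{minimizes empirical error} within each cell, so $\a(\g) = \serr(h_{S_n(\bm i,\tbY)},S_n) \le \serr(h_{S_n(\bm i,*)},S_n)$ deterministically, where $S_n(\bm i,*)$ carries the \emph{true} majority labels, and then uses monotonicity of $Q$ in $\alpha$ (Property~2) to pass from the empirical-majority classifier to the true-majority one. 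Concentration (Hoeffding) is then applied only to $\serr(h_{S_n(\bm i,*)},S_n(\bm i'))$ around $\err(h_{S_n(\bm i,*)})$, which requires no claim about majority agreement. You need some version of this comparison step; as written, your argument would break on near-balanced cells.

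\textbf{Union over cells, not compression sets.} Because the cells are determined by the data, a union bound ``over the $M_\eps$ cells'' is not directly valid. The paper conditions on the compression index set $\bm i$ (so that the remaining sample $S_n(\bm i')$ is i.i.d.\ given $\bm i$), and unions over all $\bm i \in I_{n,d}$ for each fixed $d \le \Ng$; the $n^d$-sized union is beaten by Hoeffding because $d$ is \emph{uniformly} bounded by $\Ng = \ceil{\diam(\X)/\g}^{\ddim}$. This bound on the compression size is also what resolves the ``main obstacle'' you identify: there is no residual, because the ambient space is (implicitly) of bounded diameter so that \emph{every} $\g$-net of every sample has size at most $\Ng$. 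Your truncation idea does not accomplish the same thing --- the empirical mass in $\X\setminus\X_\eps$ being small says nothing about how many net centers live there (they could be $\Omega(n)$ of them if the tail is spread out), and that blows up the $m$ passed to $Q$. The paper uses a missing-mass argument (Theorem~\ref{thm:missing_mass}) to control the measure of the region not within $\g$ of any sample point, rather than a spatial truncation, and this coexists cleanly with the global $\Ng$ bound. Without a uniform cap on the compression size, Property~3 cannot be applied.

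So: right strategy and one valid substitution (Lebesgue differentiation for $L_1$-density), but the majority-matching claim must be replaced by the minimization/monotonicity argument, the union bound must be taken over index sets rather than sample-dependent cells, and the compression-size control must come from a global net-size bound rather than a ball truncation.
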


The proof, provided in \secref{consistency_proof}, closely follows the line
of reasoning in \cite{kontorovich2014bayes}, where the strong Bayes-consistency of an adaptive margin-regularized $1$-NN algorithm was proved, but with several crucial differences.
In particular, the generalization bounds used by \algname\ are purely compression-based, as opposed to the Rademacher-based generalization bounds used in \cite{kontorovich2014bayes}.
The former can be much tighter in practice and guarantee Bayes-consistency of \algname\ even for countably many labels.
This however requires novel technical arguments, which are discussed in detail in \appref{compare_to_aistats}.
Moreover, since the compression-based bounds do not explicitly depend on $\ddim$, they can be used even when $\ddim$ is infinite, as we do in \thmref{ksubayespreiss} below.
To underscore the subtle nature of Bayes-consistency, we note that the proof technique given here does not carry to an earlier algorithm, suggested in \cite[Theorem 4]{gknips14}, which also uses $\g$-nets.
It is an open question whether the latter is Bayes-consistent.

Next, we study Bayes-consistency of
\algname\ in infinite dimensions (i.e., with $\ddim = \infty$) --- in particular, in a setting where $k$-NN was shown by \cite{cerou2006nearest} not to be Bayes-consistent.
Indeed, a straightforward application of \cite[Lemma A.1]{cerou2006nearest}
yields the following result.

\begin{theorem}[C{\'e}rou and Guyader \cite{cerou2006nearest}]
\label{thm:knn-preiss}
There exists an infinite dimensional separable metric space $(\X,\dist)$ and a realizable distribution $\bmu$ over $\X\times\{0,1\}$ such that no $k_n$-NN learner satisfying $k_n/n\to0$ when $n \to \infty$ is Bayes-consistent under $\bmu$.
In particular, this holds for
any space and realizable distribution $\bmu$ that satisfy the following condition:
The set $C$  
of points labeled $1$ by 
$\bmu$ satisfies
\beqn\label{eq:preissprop}
\mu(C) > 0
\qquad\text{and}\qquad
\forall x \in C, \quad 
\lim_{r \rightarrow
  0} \frac{\mu(C \cap \cB_r(x))}{\mu(\cB_r(x))} = 0.
\eeqn
\end{theorem}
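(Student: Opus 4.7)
The theorem combines two ingredients: the existence of a separable infinite-dimensional metric probability space satisfying the Besicovitch-type condition \eqref{preissprop}, and a general argument that this condition obstructs $k_n$-NN consistency whenever $k_n/n \to 0$. For the existence part, I would invoke a classical construction of Preiss-type: take $\X$ to be a separable Hilbert space under its norm-metric, and choose $\mu$ together with a measurable $C \subseteq \X$ with $\mu(C) > 0$ such that the density ratio in \eqref{preissprop} vanishes at every $x \in C$. Setting $Y := \pred{X \in C}$ makes the distribution realizable, so $R^*_\mu = 0$, reducing the theorem to showing $\liminf_n \err(h_{\tS_n}) > 0$ almost surely (or in expectation) for any $k_n$-NN rule with $k_n/n \to 0$.

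For that lower bound I would argue pointwise, conditioning on a typical test point $X$ drawn from $\mu \gn C$ (so the true label is $1$). Let $R_n = R_n(X)$ denote the distance from $X$ to its $k_n$-th nearest neighbor in an i.i.d.\ sample of size $n$. First, since $(\X,\dist)$ is separable, every open ball around $X$ of positive radius has positive $\mu$-mass, and $k_n/n \to 0$ forces $R_n \to 0$ in probability. Second, conditional on $R_n = r$, each of the $k_n$ sample neighbors is distributed as $\mu$ restricted to $\cB_r(X)$, so the expected fraction of neighbors labeled $1$ equals $\mu(C \cap \cB_r(X))/\mu(\cB_r(X))$; a standard Chernoff/Bernstein bound on these $k_n$ conditionally independent Bernoulli labels --- which is the content of \cite[Lemma A.1]{cerou2006nearest} --- shows that the empirical fraction concentrates at this ratio, uniformly enough to combine with the random radius.

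Plugging in the Besicovitch condition now closes the argument. On the event $\{X \in C\}$, the ratio $\mu(C \cap \cB_{R_n}(X))/\mu(\cB_{R_n}(X)) \to 0$ in probability, hence with probability tending to $1$ the $k_n$-NN majority vote at $X$ equals $0$, while $Y = 1$. Integrating against $\mu \gn C$ and using $\mu(C) > 0$, I would conclude $\liminf_n \E[\err(h_{\tS_n})] \ge \mu(C) > 0 = R^*_\mu$, so no such $k_n$-NN rule is Bayes-consistent.

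The main obstacle is the coupling between the random radius $R_n$ and the \emph{pointwise} limit in \eqref{preissprop}: the density ratio is controlled only in the deterministic sense, but we need it along the sample-dependent scale $R_n(X)$. The standard workaround, formalized in Cérou--Guyader's Lemma A.1, is to upper-bound the density ratio by a Lebesgue-differentiation-type maximal function and then integrate out the randomness of $R_n$ via dominated convergence, exploiting that $\mu(\cB_{R_n}(X))$ is bounded below in the relevant regime because $R_n$ is exactly calibrated so that $\cB_{R_n}(X)$ contains $k_n$ sample points. Once that lemma is invoked as a black box, the remaining steps are routine bookkeeping, which is why the theorem is obtained as a direct corollary of \cite{cerou2006nearest}.
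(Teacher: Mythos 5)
The paper does not actually prove \thmref{knn-preiss}; it cites the result directly from C\'erou and Guyader and remarks only that it follows from a straightforward application of their Lemma A.1. Your proposal reconstructs that external argument, and the reconstruction is sound in outline: show $R_n(X)\to 0$ in probability when $k_n/n\to 0$, observe that the $k_n$ neighbor labels are (conditionally) Bernoulli with success rate $\mu(C\cap \cB_{R_n}(X))/\mu(\cB_{R_n}(X))$, invoke the Besicovitch-violation \eqref{preissprop} to drive that rate to zero along the random scale, and conclude the majority vote misclassifies $\mu$-a.e.\ point of $C$, so $\liminf \E[\err]\ge\mu(C)>0$. This is indeed the structure of the C\'erou--Guyader proof, so your plan takes essentially the approach the paper implicitly delegates.

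Two small imprecisions worth fixing. First, separability of $(\X,\dist)$ alone does not imply that every open ball around an arbitrary $X$ has positive $\mu$-mass; what it does give is that $\mu$-almost every $X$ lies in the support of $\mu$, which is exactly what is needed to get $R_n(X)\to 0$ a.s.\ for $\mu$-a.e.\ $X$. Second, conditionally on $R_n = r$, the $k_n$ nearest neighbors are not quite i.i.d.\ from $\mu$ restricted to $\cB_r(X)$ (the $k_n$-th is pinned to the sphere of radius $r$, the others lie in the open ball, and there are exchangeability corrections); the clean way to phrase it is to condition on the event that the $k_n$ nearest neighbors fall in $\cB_{R_n}(X)$ and apply the Besicovitch ratio bound together with a concentration estimate, which is precisely the content of C\'erou--Guyader's Lemma A.1 that you invoke. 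Neither issue changes the conclusion.
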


Since $\mu(C) > 0$, \eqref{preissprop} constitutes a violation of the Besicovitch covering property.
In doubling spaces, the Besicovitch covering theorem precludes such a violation \cite{MR0257325}.
In contrast, as \cite{preiss1979invalid,MR609946} show, in infinite-dimensional spaces this violation can in fact occur. Moreover, this is not an isolated pathology, as this property is shared 
by Gaussian Hilbert spaces \cite{MR1974687}.

At first sight, \eqref{preissprop} might appear to thwart any 1-NN algorithm applied to such a distribution. However, the following result shows that this is not the case: \algname\ is Bayes-consistent on a distribution with this property. 

\begin{theorem}\label{thm:ksubayespreiss}
There is a metric space equipped with a realizable distribution 
for which \algname\ is weakly Bayes-consistent, while any $k$-NN classifier necessarily is not.
\end{theorem}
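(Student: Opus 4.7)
The plan is to prove the theorem by explicit construction. First I would exhibit a separable infinite-dimensional metric space $(\X,\dist)$ and a realizable distribution $\bmu$ over $\X\times\{0,1\}$ satisfying condition \eqref{preissprop} of \thmref{knn-preiss}, so that the C\'erou--Guyader conclusion immediately rules out $k$-NN Bayes-consistency. A natural candidate is a Preiss-type construction: a countable family of ``class-$1$ clusters'' placed at a sequence of geometric scales inside a Hilbert space, with masses and interpoint distances chosen so that every small ball centered at a class-$1$ point is dominated by class-$0$ mass sitting at a slightly larger scale.

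Next I would analyze \algname\ on this $\bmu$. Since \algref{simple} selects $\g^*_n$ to minimize the generalization bound $Q(n,\a(\g),2\k(\g),\delta_n)$, it suffices to exhibit a deterministic sequence $\g_n\to 0$ along which this bound, evaluated at $\g_n$, tends to $0$ in probability. The key structural observation is that the Voronoi cell $V_{\tilde x}$ of a $\g$-net point $\tilde x$ is \emph{not} the ball $B_\g(\tilde x)$: it is a cell of a data-dependent partition and may extend far beyond $B_\g(\tilde x)$ in directions away from competing net points. Hence even when $\bmu$ satisfies \eqref{preissprop}---so that every sufficiently small ball around $x\in C$ is dominated by the wrong label, dooming $k$-NN---the cell $V_{\tilde x}$ for $\tilde x\in C$ need not inherit this pathology, and the empirical majority inside $V_{\tilde x}$ can still recover $\tilde x$'s true label.

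Third, for the chosen $\g_n$ I would verify that, with probability approaching $1$: (i) the net size satisfies $\k(\g_n)=o(n)$; (ii) the empirical majority label in each Voronoi cell agrees with the Bayes label on $\bmu$-almost all of the cell; and hence (iii) $\a(\g_n)\to 0$. Property 1 then bounds $\err(h_{\tS_n(\g_n)})$ by $Q(n,\a(\g_n),2\k(\g_n),\delta_n)$, and Property 3 forces this bound to $0$; the selection rule of \algname\ together with Property 2 transfers this conclusion to $\g^*_n$. A standard uniform-integrability argument then yields $\E[\err(h_{\tS_n})]\to 0 = R^*_\bmu$, which is weak Bayes-consistency.

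The main obstacle is step (ii): showing that, in the Preiss-style construction, the empirical Voronoi partition of a $\g_n$-net respects the class boundary at scale $\g_n$, even though balls at every scale violate this. This requires arranging the geometry so that at each relevant scale the class-$1$ and class-$0$ mass cluster into separable ``super-atoms'' that a $\g_n$-net can reliably disentangle, and then coupling this with a concentration bound ensuring the empirical per-cell majorities match the population ones. Once (ii) is in hand, the remaining steps reduce to a direct application of the compression bound and the minimization rule of \algname.
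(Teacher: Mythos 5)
Your high-level plan matches the paper's: instantiate Preiss's construction (so that Theorem~\ref{thm:knn-preiss} kills $k$-NN), observe that Voronoi cells of a $\g$-net are not metric balls and hence need not inherit the Besicovitch pathology, and then push the empirical error of the majority-vote labeling to zero and invoke Properties 1--3. The key insight you highlight --- that \eqref{preissprop} is a statement about balls, while $\g$-net Voronoi cells can have a very different shape --- is exactly the mechanism the paper exploits. However, your proposal stops at naming the central obstacle rather than resolving it. The paper's proof turns entirely on a structural lemma (\lemref{preiss_gnet_aux}) showing that, thanks to the tree metric and the packing/covering conditions of a $\g$-net, every \emph{impure} Voronoi cell is one of exactly four explicit sets (a subtree $T(z_{1:k})$ or $T(z_{1:k+1})$, or the same augmented by the parent node), and that for large $k$ the first two types carry true majority $1$ while the latter two carry true majority $0$. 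This lemma, proved from the careful ball/subtree identities of \lemref{gnet_balls}, is the entire content of the argument; without it there is no handle on what a $\g$-net Voronoi cell looks like, and your ``super-atom'' heuristic remains a heuristic.

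Two further points where your sketch drifts from what is actually true in the construction. First, your step (ii) --- ``the empirical majority label in each Voronoi cell agrees with the Bayes label on $\mu$-almost all of the cell'' --- does not hold cell by cell. Cells of type $II_a, II_b$ get true majority $0$ and therefore misclassify the entire $V\cap C$ slice of the cell, and cells of type $I_a, I_b$ get true majority $1$ and misclassify the entire $V\setminus C$ slice. What the paper proves (\thmref{gnet_cons}) is that the \emph{total} mass of such misclassified points, summed over all impure cells, tends to $0$ as $k\to\infty$, via the explicit formulas for $\mu(T(z_{1:k})\cap C)$, $\mu(T(z_{1:k})\setminus C)$, and the cell counts at each depth. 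Second, your framing that the majority ``can recover $\tilde x$'s true label'' for $\tilde x\in C$ is misleading: anchors are sample points and so are frequently finite sequences in $Z_0$; moreover the cell majority can disagree with the anchor's label (e.g., a type-$I$ cell anchored at a point of $Z_0$ has majority $1$). The correct statement is about the aggregate discrepancy between $h_{\gnet_k}$ and the Bayes classifier, not about any individual anchor. Finally, you still need the empirical-to-true transfer: \lemref{empnet} and \lemref{gnet_finite_size} to show that, for $n$ large, an empirical $\g_k$-net is a $\g_k$-net of all of $\X$, that it is finite, and a per-cell concentration step (using that the gap $\beta>0$ between class masses in each impure cell is bounded away from $0$, by part (B) of \lemref{preiss_gnet_aux}) to show the empirical majority matches the true majority with high probability. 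These are routine once the structural lemma is in hand, but they are not supplied by your sketch.
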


The proof relies on a classic construction of Preiss \cite{preiss1979invalid} which satisfies \eqref{preissprop}.
We show that the structure of the construction, combined with the packing and covering properties of $\g$-nets, imply that the majority-vote classifier induced by {\em any} $\g$-net with a sufficienlty small $\g$ approaches the Bayes error.
%

We conclude the main results by posing intriguing open problems. 
\paragraph{Open problem 1.}
Does there exist a metric probability space on which some $k$-NN algorithm is consistent while \algname\ is not? Does there exist {\em any} separable metric space on which \algname\ fails?
\paragraph{Open problem 2.}
C{\'e}rou and Guyader \cite{cerou2006nearest} distill a certain Besicovitch condition 
which is necessary and sufficient for $k$-NN to be Bayes-consistent in a metric space. 
Our \thmref{ksubayespreiss} shows that the Besicovitch condition is {\em not} necessary for \algname\ to be Bayes-consistent. Is it sufficient?
What is a necessary condition?

\section{Bayes-consistency of \algname\ in finite dimensions}
\label{sec:consistency_proof}
In this section we give a high-level proof of \thmref{comp-consist}, showing that \algname\ is strongly Bayes-consistent in finite-dimensional metric spaces.
A fully detailed proof is given in \appref{finite_dim_proofs}.

Recall the optimal empirical error $\a_n^*\equiv \a(\g^*_n)$ and the optimal compression size $\k_n^*\equiv\k(\g^*_n)$ as computed by \algname.
As shown in \propref{compression-scheme}, the sub-sample $\tS_n(\g_n^*)$ is an $(\a_n^*, 2\k_n^*)$-compression of $S_n$.
Abbreviate the compression-based generalization bound used in \algname\ by
\[
Q_n(\alpha,m) := Q(n,\alpha,2m,\delta_n).
\]
To show Bayes-consistency, we start by a standard decomposition of the excess error over the optimal Bayes into two terms:
\beq
\err(h_{\tS_n(\g^*_n)}) - R^*  
=
\big(\err(h_{\tS_n(\g^*_n)}) - Q_n(\a_n^*,\k_n^*) \big)
+
\big(Q_n(\a_n^*,\k_n^*) - R^*\big)
=:
T_I(n) + T_{II}(n),
\eeq
and show that each term decays to zero with probability one.
For the first term, Property 1 for $Q$, together with the Borel-Cantelli lemma, readily imply $\limsup_{n\to\infty} T_I(n) \leq 0$ with probability one. 
The main challenge is showing that $\limsup_{n\to\infty} T_{II}(n) \leq 0$ with probability one.
We do so in several stages:
\begin{enumerate}
\item Loosely speaking, we first show (\lemref{richness}) that the Bayes error $R^*$ can be well approximated using 1-NN classifiers defined by the {\em true} (as opposed to empirical) majority-vote labels over fine partitions of $\X$.
In particular, this holds for any partition induced by a $\g$-net of $\X$ with a sufficiently small $\g>0$.
This approximation guarantee relies on the fact that in finite-dimensional spaces, the class of continuous functions with compact support is dense in $L_1(\mu)$ (\lemref{dense_cont}).

\item Fix $\tilde \g>0$ sufficiently small such that any true majority-vote classifier induced by a $\tilde\g$-net has a true error close to $R^*$, as guaranteed by stage 1. 
Since for bounded subsets of finite-dimensional spaces the size of any $\g$-net is finite, the empirical error of any majority-vote $\g$-net almost surely converges to its true majority-vote error as the sample size $n\to\infty$.
Let $n(\tilde \g)$ sufficiently large such that $Q_{n(\tilde \g)}(\a(\tilde \g), \k(\tilde \g))$ as computed by \algname\ for a sample of size $n(\tilde \g)$ is a reliable estimate for the true error of $h_{\tS_{n(\tilde \g)}(\tilde\g)}$.
		
\item Let $\tilde\g$ and $n(\tilde\g)$ be as in stage 2.
Given a sample of size $n = n(\tilde\g)$,
recall that \algname\ selects an optimal $\g^*$ such that $Q_n(\a(\g), \k(\g))$ is minimized over all $\g>0$.
For margins $\g \ll \tilde\g$, which are prone to over-fitting, $Q_n(\a(\g), \k(\g))$ is not a reliable estimate for $h_{\tS_n(\g)}$ since compression may not yet taken place for samples of size $n$.
Nevertheless, these margins are discarded by \algname\ due to the penalty term in $Q$. 
On the other hand, for $\g$-nets with margin $\g \gg \tilde\g$, which are prone to under-fitting, the true error is well estimated by $Q_n(\a(\g), \k(\g))$.
It follows that \algname\ selects $\g_n^*\approx \tilde\g$ and  $Q_n(\a_n^*, \k_n^*)\approx R^*$, implying $\limsup_{n\to\infty} T_{II}(n) \leq 0$ with probability one.
\end{enumerate}
As one can see, the assumption that $\X$ is finite-dimensional plays a major role in the proof.
A simple argument shows that the family of continuous functions with compact support
is no longer dense in $L_1$ in infinite-dimensional spaces.
In addition, $\g$-nets of bounded subsets in infinite dimensional spaces need no longer be finite.

\section{On Bayes-consistency of NN algorithms in infinite dimensions}
 \label{sec:infinite}

In this section we study the Bayes-consistency properties of 
1-NN algorithms on a classic infinite-dimensional construction of Preiss \cite{preiss1979invalid}, which we describe below in detail.
This construction was first introduced as a concrete example showing that in infinite-dimensional spaces the Besicovich covering theorem \cite{MR0257325} can be strongly violated, as manifested in \eqref{preissprop}.

\begin{example}[Preiss's construction]\label{ex:preiss}
The construction (see Figure \ref{fig:preiss_construction}) defines an infinite-dimensional metric space $(\X,\rho)$ and a realizable measure $\bmu$ over $\X \times \Y$ with the binary label set $\Y = \{0,1\}$. 
It relies on two sequences: a sequence of natural numbers $\{N_k\}_{k \in \N}$ and a sequence of positive numbers $\{a_k\}_{k \in \N}$.
The two sequences should satisfy the following:
\begin{align}\label{eq:nkak}
\textstyle
\sum_{k=1}^\infty a_k N_1 \dots N_k = 1;
\quad \lim_{k\to\infty} a_{k } N_1 \dots  N_{k+1} = \infty;
\quad \text{and} \quad
 \lim_{k\to\infty} N_k = \infty.
\end{align}
These properties are satisfied, for instance, by setting $N_k := k!$ and $a_k := 2^{-k}/\prod_{i\in[k]}N_i$.
Let $\fins$ be the set of all {finite} sequences $(z_1,\dots,z_k)_{k\in\N}$ of natural numbers such that $z_i \leq N_i$, and let $\infs$ be the set of all {infinite} sequences $(z_1,z_2,\dots)$ of natural numbers such that $z_i \leq N_i$.

Define the example space $\X := \fins \cup \infs$
and denote $\seq{k} := 2^{-k}$, where $\seq{\infty} := 0$.
The metric $\rho$ over $\X$ is defined as follows: for $x,y \in \X$, denote by $\ca{x}{y}$ their longest common prefix.
Then,
\[
\rho(x,y) = 
(\seq{|\ca{x}{y}|}  - \seq{|x|}) + (\seq{|\ca{x}{y}|}  - \seq{|y|}).
\]
It can be shown (see \cite{preiss1979invalid}) that $\rho(x,y)$ is a metric; in fact, it embeds isometrically into the square norm metric of a Hilbert space.

To define $\mu$, the marginal measure over $\X$, let $\nu_\infty$ be the uniform product distribution measure over $\infs$, that is: for all $i \in \N$, each $z_i$ in the sequence $z= (z_1,z_2,\dots)\in\infs$ is independently drawn from a uniform distribution over $[N_i]$.
Let $\nu_0$ be an atomic measure on $\fins$ such that for all $z \in \fins$,
$\nu_0(z) = a_{|z|}$.
Clearly, the first condition in \eqref{nkak} implies $\nu_0(\fins) = 1$.
Define the marginal probability measure $\mu$ over $\X$  by 
\[
\forall A \subseteq \fins \cup \infs, \quad \mu(A) := \alpha \nu_\infty(A) + (1-\alpha)\nu_0(A). 
\]
In words, an infinite sequence is drawn with probability $\alpha$ (and all such sequences are equally likely), or else a finite sequence is drawn (and all finite sequences of the same length are equally likely).
Define the realizable distribution $\bmu$ over $\X \times \Y$ by setting the marginal over $\X$ to $\mu$, and by setting the label of $z \in \infs$ to be $1$ with probability $1$ and the label of $z \in \fins$ to be $0$ with probability $1$. 
\end{example}

\begin{figure}%
\center
\begin{tikzpicture}[scale=1.5,font=\footnotesize]
\tikzset{
solid node/.style={circle,draw,inner sep=1.5,fill=black},
hollow node/.style={inner sep=1}
}
\tikzstyle{level 1}=[level distance=8mm,sibling distance=8mm]
\tikzstyle{level 2}=[level distance=8mm,sibling distance=6mm]
\tikzstyle{level 3}=[level distance=8mm,sibling distance=8mm]
\tikzstyle arrowstyle=[scale=1]
\tikzstyle directed=[postaction={decorate,decoration={markings,
mark=at position .5 with {\arrow[arrowstyle]{stealth}}}}]
\node(0)[solid node,label = right:{$z_{1:k-2}$}]{}
child[dashed]{node(1)[hollow node]{}
edge from parent node[left,xshift=-3]{}
}
child[dashed]{}
child[dashed]{}
child{node(2)[solid node,label=right:{$z_{1:k-1}$}]{}
child{node(3)[solid node]{}
child[dashed]{node[hollow node,label=below:{}]{}edge from parent node[left]{$\g_{k}$} }
child[dashed]{node[hollow node,label=below:{}]{} edge from parent node[right]{$\g_k$}}
edge from parent node[left]{$\g_k$}}
child[dashed]{}
child[dashed]{}
child{node(4)[solid node, label=right:{$z_{1:k}$}]{}
child[dashed]{node(5)[solid node,label=below:{$z$}]{}edge from parent node[left]{$\g_k$} }
child[dashed]{node(6)[hollow node,label=below:{}]{} edge from parent node[right]{$\g_k$}}
edge from parent node[right]{$\g_k$}
}
edge from parent node[right,xshift=3]{$\g_{k-1}$}
};
\draw[dashed,rounded corners=10]($(5) + (-.2,-.25)$) ($(4) +(.2,-.25)$);
\node at ($(3)!.5!(4)$) {};
\node[hollow node] (RB) at ($(6) + (0.5,0)$){};
\node[hollow node] (RRB) at ($(RB) + (1.2,0)$){$C = \infs$};
\node[hollow node] (LB) at ($(RB) - (4.5,0)$){};
\draw[thick] (LB) -- (RB);
\draw[thick, dashed] (LB) -- ($(LB)-(.5,0)$);
\draw[thick, dashed] (LB) -- ($(RB)+(.5,0)$);
\draw[dashed] (0) -- ($(0)+ (0,.3)$);
\draw[gray] plot  coordinates {
($(5)+(-.3,0)$)
($(5)+(-.2,-.3)$)
($(6)+(.15,-.3)$)
($(6)+(.3,0)$)
($(4)+(.3,0)$)
($(2)+(-.03,.26)$)
($(2)+(-.25,0)$)
($(4)+(-.25,0)$)
($(5)+(-.3,0)$)
};
\end{tikzpicture}
\caption{Preiss's construction. Encircled is the closed ball $\cB_{\g_{k-1}}(z)$ for some $z\in C$.} %
\label{fig:preiss_construction}%
\end{figure}
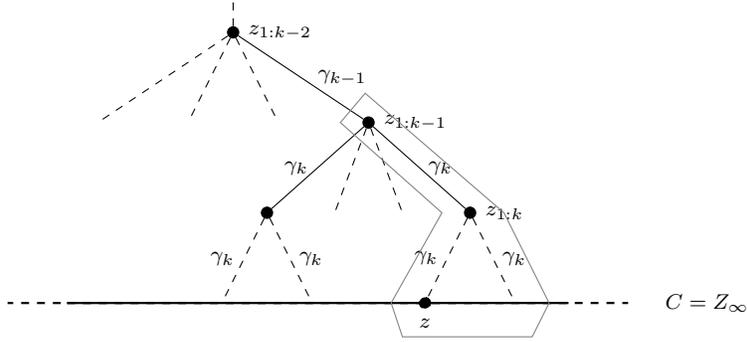

As shown in \cite{preiss1979invalid}, this construction satisfies \eqref{preissprop} with $C=\infs$ and $\mu(C)=\alpha>0$.
It follows from \thmref{knn-preiss} that no $k$-NN algorithm is Bayes-consistent on it.
In contrast, the following theorem shows that \algname\ is weakly Bayes-consistent on this distribution. \thmref{ksubayespreiss} immediately follows from the this result. 

\begin{theorem}\label{thm:ksupreissemp}
Assume $(\X,\rho),\Y$ and $\bmu$ as in \exref{preiss}.
\algname\ is weakly Bayes-consistent on $\bmu$. 
\end{theorem}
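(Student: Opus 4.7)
The plan is to establish weak Bayes-consistency via the compression bound plus \algname's optimality across scales. Since $\bmu$ is realizable, $R^{*}=0$, and it suffices to prove $\E[\err(h_{\tS_n})]\to 0$. For any reference scale $\bar\g_n \in \Gamma$, Property~1 combined with the optimality of $\g^{*}_n$ yields $\err(h_{\tS_n}) \le Q(n,\a(\bar\g_n),2\k(\bar\g_n),\delta_n)$ with probability $\ge 1-\delta_n$; taking expectations (and using $\delta_n \to 0$ from Property~3) reduces the problem to exhibiting $\bar\g_n$ for which $\E[Q(n,\a(\bar\g_n),2\k(\bar\g_n),\delta_n)] \to 0$. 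By the explicit form \eqref{KSUbound}, this in turn reduces to finding $\bar\g_n$ such that $\E[\a(\bar\g_n)] = o(1)$ and $\k(\bar\g_n) = o(n/\log n)$.

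Choose $k_n \to \infty$ slowly enough that $N_1\cdots N_{k_n+1} = o(n/\log n)$, which is possible because $\{N_k\}_{k\in\N}$ diverges. Set $\bar\g_n := 2^{-k_n}$; this lies in $\Gamma$ with high probability, since any two sampled infinite sequences sharing exactly a length-$(k_n{+}1)$ prefix realize distance $\bar\g_n$, and such pairs are abundant when $n^2/(N_1\cdots N_{k_n+1}) \to \infty$. The tree-like packing/covering geometry of Preiss's metric organizes any $\bar\g_n$-net of the sample as follows: every sample of length $\ge k_n$ or infinite lies within $\bar\g_n$ of a net representative sharing its length-$(k_n{+}1)$ prefix, whereas short finite samples $z_{1:m}$ with $m < k_n - 1$ sit at distance $>\bar\g_n$ from everything else and must each be isolated singleton net points. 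Counting representatives across all length-$(k_n{+}1)$ prefixes and all short finite lengths yields the deterministic bound
\[
\k(\bar\g_n) \;\le\; N_1\cdots N_{k_n+1} + \textstyle\sum_{m<k_n-1} N_1\cdots N_m \;=\; O(N_1\cdots N_{k_n+1}) \;=\; o(n/\log n).
\]

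For the empirical error, within any Voronoi cell rooted at a length-$(k_n{+}1)$ prefix $q$, the conditional probability of a sample being infinite equals $\alpha/[2(1-\alpha)2^{-(k_n+1)}+\alpha]\to 1$, where $\alpha$ is the Preiss mixture probability. With high probability every nonempty such cell has more infinite than finite samples, so its empirical majority label is $1$, and the only mis-voted samples are the (few) finite ones extending $q$; summing the expected finite counts over cells yields $\E[\a(\bar\g_n)] = O(2^{-k_n}) \to 0$, while singleton short-finite cells are trivially labeled $0$ correctly. Substituting these rates into \eqref{KSUbound} delivers $\E[Q(n,\a(\bar\g_n),2\k(\bar\g_n),\delta_n)] \to 0$, hence $\E[\err(h_{\tS_n})] \to 0$.

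The main obstacle will be making the geometric description of a generic $\bar\g_n$-net rigorous---precisely characterizing which Voronoi cells each sample point falls into under the algorithm's (unspecified) choice of cluster representative, handling the borderline distance $\bar\g_n$ (tie-breaking and the attachment of length-$(k_n{-}1)$ finite samples to length-$k_n$ clusters), and confirming that the empirical majorities agree with the true majorities with sufficient probability. This last point requires each non-empty cell to contain many infinite samples, which follows from $n/(N_1\cdots N_{k_n+1}) \to \infty$ and a Chernoff bound. A subsidiary technicality is a second-moment argument verifying $\bar\g_n \in \Gamma$ with probability tending to one.
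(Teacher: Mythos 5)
Your high-level strategy—invoke the compression bound, pick a reference scale $\bar\g_n = 2^{-k_n}$ with $k_n$ growing slowly, and argue that $\a(\bar\g_n)$ and $\k(\bar\g_n)$ are both small—is compatible with the paper's approach (which also analyzes the behavior of $\g_k$-nets as $k\to\infty$ and then lets \algname's optimality do the rest). However, there is a substantive error in your characterization of the Voronoi cells, and it invalidates your empirical-error bound.

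You claim that ``with high probability every nonempty such cell has more infinite than finite samples, so its empirical majority label is~$1$, and the only mis-voted samples are the (few) finite ones extending $q$.'' This is false, and the falsity is precisely the crux of Preiss's construction. A $\g_k$-net Voronoi cell can contain a heavy atom: if the cell has the form $\Tu(z_{1:k+1}) = T(z_{1:k+1})\cup\{z_{1:k}\}$ (or $\Tu(z_{1:k})$), the atom $z_{1:k}$ carries mass $(1-\alpha)a_k$, while the infinite sequences in the cell carry mass only $\alpha/(N_1\cdots N_{k+1})$. The second Preiss condition in \eqref{nkak}, $a_k N_1\cdots N_{k+1}\to\infty$, guarantees that the atom \emph{dominates}, so the true (and, with high probability, the empirical) majority label in these cells is $0$, not $1$. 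Your accounting of short finite samples covers only $|z|<k_n-1$ and silently drops lengths $k_n-1$ and $k_n$, which are exactly the atoms that get absorbed into cells and flip the vote. Consequently your rate $\E[\a(\bar\g_n)] = O(2^{-k_n})$ has no justification (your formula $\alpha/[2(1-\alpha)2^{-(k_n+1)}+\alpha]$ conflates the metric parameter $2^{-k}$ with the mass parameters $a_k,N_k$, which are independent of it), and more importantly your proof never bounds the contribution of the infinite sequences that are misclassified in atom-dominated cells.

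To close the gap you need the structural dichotomy that the paper develops in \lemref{anc} and \lemref{preiss_gnet_aux}: impure cells come in four types, $T(z_{1:k})$, $T(z_{1:k+1})$, $\Tu(z_{1:k})$, $\Tu(z_{1:k+1})$; the first two vote~$1$ and the last two vote~$0$ for large $k$. The error is then the sum of (i) the finite-sequence mass in the $T$-type cells, which is bounded by the tail $\sum_{j\ge k}a_jN_1\cdots N_j\to 0$, and (ii) the infinite-sequence mass in the $\Tu$-type cells, which is bounded by $\alpha/N_k\to 0$ using that at most $N_1\cdots N_{k-1}$ such cells exist. Without this two-sided accounting (and without verifying that the empirical majorities in the finitely many impure cells match the true majorities, which requires $\beta=\min_V|\mu(V\cap C)-\mu(V\setminus C)|>0$ for the chosen $k$), the argument does not establish $\a(\bar\g_n)\to 0$.
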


The proof, provided in \appref{preiss_construction}, first characterizes the Voronoi cells for which the true majority-vote yields a significant error for the cell (\lemref{preiss_gnet_aux}).
In finite-dimensional spaces, the total measure of all such ``bad'' cells can be made arbitrarily close to zero by taking $\g$ to be sufficiently small, as shown in \lemref{richness} of \thmref{comp-consist}.
However, it is not immediately clear whether this can be achieved for the infinite dimensional construction above.

Indeed, we expect such bad cells, due to the unintuitive property that for any $x\in C$, we have $\mu(\cB_\g(x) \cap C)/ {\mu(\cB_\g(x))} \to0$ when ${\g\to 0}$, and yet $\mu(C)> 0$.
Thus, if for example a significant portion of the set $C$ (whose label is 1) is covered by Voronoi cells of the form $V = \cB_{\g}(x)$ with $x\in C$, then for all sufficiently small $\g$, each one of these cells will have a true majority-vote $0$. Thus a significant portion of $C$ would be misclassified.
However, we show that by the structure of the construction, combined with the packing and covering properties of $\g$-nets, we have that in {\em any} $\g$-net, the total measure of all these ``bad'' cells goes to 0 when $\g\to0$, thus yielding a consistent classifier.

\begin{remark} In a previous version of the manuscript the following erroneous claim was made (Theorems 5 and 7 in previous version): "For the example constructed in Theorem \ref{thm:ksupreissemp}, there exists a sequence of [countable] Voronoi partitions with a vanishing diameter, such that the induced true majority-vote classifiers are not Bayes consistent". Unfortunately, the proof contains technical errors. In fact, such sequences do not exist, as recently established by \cite[Lemma 4.3]{HKSW2019}.
\end{remark}

%
%

\paragraph{Acknowledgments.}
We thank
Fr\'ed\'eric C\'erou
for the numerous fruitful discussions and helpful feedback on an earlier draft.
Aryeh Kontorovich was supported in part by
the Israel Science Foundation (grant No. 755/15),
Paypal and IBM.
Sivan Sabato was supported in part by the Israel Science Foundation (grant No. 555/15).

\appendix

\section{Background on metric measure spaces}
\label{ap:metric_space_basics}
Here we provide some general relevant background on metric measure spaces.
Our metric space $(\X,\rho)$ is
doubling,
but in this section finite diameter is not assumed.
We recall some standard definitions.
A topological space is {\em Hausdorff} if every two distinct points
have disjoint neighborhoods. It is a standard (and obvious) fact
that all metric spaces are Hausdorff.

A metric space $\X$ is {\em complete} if every Cauchy sequence
converges to a point in $\X$.
Every metric space may be completed
by (essentially) adjoining to it the limits of all of its Cauchy
sequences \cite[Exercise 3.24]{MR0385023};
moreover, the completion is unique up to isometry
\cite[Section 43, Exercise 10]{MR0464128}.
We
implicitly
assume
throughout the paper
that
$\X$
is
complete.
Closed subsets of complete metric spaces
are also complete metric spaces under the inherited metric.

A
topological
space $\X$ is {\em locally compact} if every point $x\in\X$
has a compact neighborhood. It is a standard and easy fact
that complete doubling spaces are locally compact.
Indeed, consider any $x\in\X$
and the open $r$-ball about $x$, $B_r(x):=\set{y\in\X:\rho(x,y)<r}$.
We must show that 
$\cl(B_r(x))$
--- the closure of $B_r(x)$
--- 
is compact. To this end, it suffices to show that
$\cl(B_r(x))$
is {\em totally bounded}
(that is, has a finite $\eps$-covering number for each $\eps>0$),
since in complete metric spaces, a set is compact iff it is closed and
totally bounded \cite[Theorem 45.1]{MR0464128}.
Total boundedness follows immediately from the doubling property.
The latter posits a
constant $k$
and some
$x_1,\ldots,x_k\in\X$ such that
$B_r(x)\subseteq\cup_{i=1}^k B_{r/2}(x_i)$.
Then certainly
$
\cl(B_r(x))
\subseteq
\cup_{i=1}^k B_{2r/3}(x_i).
$
We now apply the doubling property recursively to each of the
$B_{2r/3}(x_i)$, until the radius of the covering balls becomes smaller than $\eps$.

We now recall some standard facts from measure theory.
Any topology on $\X$ (and in particular, the one induced by the metric
$\rho$),
induces the Borel $\sigma$-algebra $\mathscr{B}$.
A Borel probability measure is a function $\mu:\mathscr{B}\to[0,1]$
that is countably additive and normalized by $\mu(\X)=1$.
The latter is {\em complete} if for all
$A\subseteq B\in\mathscr{B}$ for which $\mu(B)=0$, we also
have $\mu(A)=0$. Any Borel $\sigma$-algebra may be completed
by defining the measure of any subset of a measure-zero set to be zero
\cite[Theorem 1.36]{rudin87}.
We
implicitly
assume
throughout the paper
that
$(\X,\mathscr{A},\mu)$ is a complete measure space,
where $\mathscr{A}$ contains all of the Borel sets.

The measure $\mu$ is said to be {\em outer regular}
if it can be approximated from above by open sets:
For every $E\in\mathscr{A}$, we have
\beq
\mu(E)=\inf\set{\mu(V):E\subseteq V, V~\text{open}}.
\eeq
A corresponding {\em inner regularity}
corresponds to approximability from below by compact sets:
For every $E\in\mathscr{A}$,
\beq
\mu(E)=\sup\set{\mu(K):K\subseteq E, K~\text{compact}}.
\eeq
The measure $\mu$ is {\em regular} if it is both inner and outer regular.
Any probability measure defined on the Borel $\sigma$-algebra
of a metric space is regular \cite[Lemma 1.19]{Kallenberg02}.
(Dropping the ``metric'' or ``probability'' assumptions
opens the door to various
exotic pathologies
\cite[Chapter 7]{MR2267655}, \cite[Exercise 2.17]{rudin87}.)

Finally, we have the following technical result, adapted
from \cite[Theorem 3.14]{rudin87}
to our setting:
\begin{theorem}
\label{thm:cc-dense}
Let $\X$ be a complete doubling metric space equipped
with a complete
probability measure $\mu$, such that all Borel sets are $\mu$-measurable.
Then $C_c(\X)$ (the collection of continuous functions with compact support)
is dense in $L_1(\mu)$.
\end{theorem}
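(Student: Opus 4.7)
The plan is to verify that the hypotheses of Rudin's Theorem 3.14 \cite{rudin87} are met in our setting and then invoke it directly. Rudin's theorem states that $C_c(\X)$ is dense in $L_p(\mu)$ for $1 \le p < \infty$ whenever $\X$ is a locally compact Hausdorff space and $\mu$ is a regular Borel measure that is finite on compact sets. So the task reduces to checking four items: (i) $\X$ is Hausdorff, (ii) $\X$ is locally compact, (iii) $\mu$ is regular, and (iv) $\mu$ assigns finite measure to every compact set.

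Items (i) and (iv) are immediate. Every metric space is Hausdorff, since distinct points $x \neq y$ admit the disjoint neighborhoods $B_{r}(x)$ and $B_{r}(y)$ with $r := \rho(x,y)/3$. For (iv), $\mu$ is a probability measure, hence $\mu(K) \le \mu(\X) = 1 < \infty$ for every compact (in particular every Borel) set $K$.

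For (ii), I would reuse the argument already presented earlier in the appendix: in a complete metric space, a set is compact iff it is closed and totally bounded, and the doubling property applied recursively to any open ball $B_r(x)$ yields a finite $\eps$-cover of its closure for every $\eps > 0$, producing a compact neighborhood of $x$. Thus every point in $\X$ has a compact neighborhood, so $\X$ is locally compact.

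For (iii), I would cite \cite[Lemma 1.19]{Kallenberg02}, which guarantees that any Borel probability measure on a metric space is regular (both inner regular by compact sets and outer regular by open sets). Since the completion of $\mathscr{B}$ to $\mathscr{A}$ does not disturb regularity (measure-zero adjustments preserve both sup and inf approximations), $\mu$ remains regular on $\mathscr{A}$. With (i)--(iv) in hand, Rudin's Theorem 3.14 applies with $p=1$, yielding that $C_c(\X)$ is dense in $L_1(\mu)$. I do not foresee a real obstacle here; the only subtlety is confirming that local compactness follows from complete plus doubling (already established in this appendix) and that the probability normalization is what makes ``finite on compact sets'' free of charge.
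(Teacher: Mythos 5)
Your proof takes essentially the same route as the paper's: the surrounding appendix text establishes that complete doubling spaces are locally compact Hausdorff and that Borel probability measures on metric spaces are regular (citing Kallenberg Lemma 1.19), and then the theorem is stated as ``adapted from Rudin Theorem 3.14 to our setting.'' Your item-by-item verification of Rudin's hypotheses is precisely this plan made explicit, including the (correct) observations that Hausdorffness and finiteness on compacts are automatic for a metric probability space and that completing the Borel $\sigma$-algebra does not disturb regularity.
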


\section{Bayes-consistency proof of \algname\ in finite dimensions}
\label{ap:finite_dim_proofs}
In this section we prove \thmref{comp-consist} in full detail.
Let $(\X,\rho)$ be a metric space with doubling-dimension $\ddim<\infty$.
Given a sample $S_n\sim \bmu^n$, we abbreviate the optimal empirical error $\a_n^*=\a(\g^*_n)$ and the optimal compression size $\k_n^*=\k(\g^*_n)$ as computed by \algname.
As shown in \secref{compression_scheme}, the labeled set $\tS_n(\g_n^*)$ computed by {\algname} is an
$(\a_n^*, 2\k_n^*)$-compression of the sample $S_n$.
For brevity we denote
\[
Q_n(\alpha,m) := Q(n,\alpha,2m,\delta_n).
\]
To prove Theorem \ref{thm:comp-consist} we decompose the excess error over the Bayes into two terms:
\beq
\err(h_{\tS_n(\g^*_n)}) - R^*  
&= &
\big(\err(h_{\tS_n(\g^*_n)}) - Q_n(\a_n^*,\k_n^*) \big)
+
\big(Q_n(\a_n^*,\k_n^*) - R^*\big)
\\
&=:&
T_I(n) + T_{II}(n),
\eeq
and show that each term decays to zero with probability one.

For the first term, $T_I(n)$, from Property 1 of generalization bound $Q$, we have that for any $n> 0$,
\beqn
\label{eq:termI_bound}
\P_{S_n}\!
\left[
\err(h_{\tS_n(\g^*_n)}) - Q_n(\a_n^*,\k_n^*) > 0
\right] 
\leq \delta_n.
\eeqn
Since $\sum \delta_n <\infty$, the Borel-Cantelli lemma implies $\limsup_{n\to\infty} T_I(n) \leq 0$ with probability $1$. 

The main challenge is to prove that $\limsup_{n\to\infty} T_{II}(n) \leq 0$ with probability one.
We begin by showing that the Bayes error $R^*$ can be approached using classifiers defined by the true majority-vote labeling over fine partitions of $\X$.
Formally, let $\SP = \{\sp_1,\dots\}$ be a finite partition of $\X$, and
define the function
$I_\SP: \X \to \SP$ such that $I_\SP(x)$ is the unique $\sp\in\SP$
for which $x\in\sp$.
For any measurable set
$\emptyset\neq E\subseteq\X$
define the true majority-vote label $y^*(E)$ by
\beqn
\label{eq:bar y(E)}
y^*(E)
= \argmax_{y\in\Y} \P_\bmu(Y=y \gn X\in E),
\eeqn
where ties are broken lexicographically.
To ensure that $y^*$ is always well-defined, when $E=\emptyset$ we arbitrarily define it to be the lexicographically first $y\in\Y$.
Given $\SP$ and a measurable set $M \subseteq \X$, consider the true majority-vote classifier $h_{\SP,M}^*:\X\to\Y$ given by
\beqn
\label{eq:Strue2}
h_{\SP,M}^*(x) = y^*(I_\SP(x) \cap M).
\eeqn
Note that if $x \notin M$, this classifier attaches a label to $x$ based on the true majority-vote in a set that does not contain $x$.
To bound the error of $h_{\SP,M}^*$ for any conditional distribution of labels, we use the fact that on doubling metric spaces, continuous functions are dense in $L_1(\mu)$.
\begin{lemma}
\label{lem:dense_cont}
For every probability measure $\mu$ on a doubling metric space $\X$, the set of continuous functions $f:\X\to\R$ with compact support is dense in $L_1(\mu)= \{f: \int_{\X} {\abs{f}} d\mu(x)<\infty\}$. Namely, for any $\eps>0$ and $f\in L_1(\mu)$ there is a continuous function $g\in L_1(\mu)$ with compact support such that
\(
\int_{\X} \abs{f-g} d\mu(x) < \eps.
\)
\end{lemma}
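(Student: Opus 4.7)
The plan is to reduce Lemma \ref{lem:dense_cont} directly to Theorem \ref{thm:cc-dense} from Appendix \ref{ap:metric_space_basics}, whose hypotheses are essentially a checklist of the implicit standing assumptions on $(\X,\rho,\mu)$: completeness of the metric space, completeness of the measure, and that all Borel sets are $\mu$-measurable. So the first step is to observe that these standing assumptions apply here, and hence Theorem \ref{thm:cc-dense} yields exactly the density claim of the lemma. If a self-contained proof is desired, I would sketch the underlying classical argument, following \cite[Theorem 3.14]{rudin87}.

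For the self-contained sketch, I would proceed in three steps. First, verify that $(\X,\rho)$ is locally compact. This is exactly the argument recalled in Appendix \ref{ap:metric_space_basics}: for any $x\in\X$ and $r>0$, iterate the doubling property to cover $\cl(B_r(x))$ by finitely many balls of arbitrarily small radius, showing $\cl(B_r(x))$ is totally bounded and hence compact (since $\X$ is complete). Second, invoke the fact that any Borel probability measure on a metric space is both inner and outer regular (\cite[Lemma 1.19]{Kallenberg02}), so that every measurable set can be approximated from outside by open sets and from inside by compact sets.

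With local compactness and regularity in hand, the standard scheme takes over. By linearity and truncation it suffices to approximate $\pred{E}$ in $L_1(\mu)$ for a single Borel set $E$ of finite measure. By outer regularity choose an open $V\supseteq E$ with $\mu(V\setminus E)<\eps/2$, and by inner regularity a compact $K\subseteq E$ with $\mu(E\setminus K)<\eps/2$. Using local compactness, sandwich $K$ between $K\subseteq W\subseteq \cl(W)\subseteq V$ where $\cl(W)$ is compact, and apply Urysohn's lemma in the locally compact Hausdorff space $\X$ to produce a continuous $g\in C_c(\X)$ with $\pred{K}\le g\le \pred{V}$ and $\mathrm{supp}(g)\subseteq\cl(W)$. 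Then
\[
\int_\X \abs{\pred{E}-g}\, d\mu \le \mu(V\setminus K) < \eps,
\]
completing the approximation.

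The only delicate point, and what Theorem \ref{thm:cc-dense} is there to absorb, is the passage from the topological/measure-theoretic prerequisites to the Urysohn construction with \emph{compact} support. In a general metric space this can fail (one needs local compactness to find $W$ with $\cl(W)$ compact), so the key use of the doubling hypothesis is exactly to supply local compactness. Everything else is the textbook Rudin argument, with no special use of finite doubling dimension beyond this topological consequence.
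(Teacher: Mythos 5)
Your proposal is correct and takes the same approach as the paper: the paper's proof is a one-line pointer to Theorem~\ref{thm:cc-dense}, and your self-contained sketch reproduces exactly the chain of reasoning laid out in \appref{metric_space_basics} (doubling $+$ completeness $\Rightarrow$ local compactness, regularity of Borel probability measures on metric spaces, then the Rudin-style Urysohn argument). You also correctly flag the one subtle point, namely that the lemma statement omits the completeness hypothesis and that it is supplied by the paper's standing assumptions, which is exactly what makes the reduction to Theorem~\ref{thm:cc-dense} legitimate.
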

\begin{proof}
  This is stated as \thmref{cc-dense}
  in \appref{metric_space_basics}.
\end{proof}

We have the following uniform approximation bound for the error of classifiers in the form of (\ref{eq:Strue2}), essentially extending the approximation analysis done in the proof of \cite[Theorem 21.2]{devroye2013probabilistic} for the special case $|\Y|=2$ and $\X=\R^d$ to the more general multi-class problem in doubling metric spaces.

\begin{lemma}
\label{lem:richness}
Let $\bmu$ be a probability measure on $\X\times\Y$ where $\X$ is a doubling metric space.
For any $\nu>0$, there exists a diameter $\beta=\beta(\nu)>0$ such that for any finite measurable partition $\SP = \{\sp_1,\dots\}$ of $\X$ and any measurable set $M \subseteq \X$ satisfying 
  \begin{itemize}
  \item[(i)] $\mu(\X\setminus M) \leq \nu$
  \item[(ii)] $\diam(\SP \cap M)\leq \beta$,
  \end{itemize}
  the true majority-vote classifier $h_{\SP,M}^*$ defined in (\ref{eq:Strue2}) satisfies
$\err(h_{\SP,M}^*) \leq R^* + 5\nu$.
\end{lemma}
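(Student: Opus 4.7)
The plan is to establish a clean excess-risk identity that reduces the problem to controlling a single integral over $M$, and then handle the countable $\Y$ by a finite truncation combined with the density of continuous compactly supported functions in $L_1(\mu)$ (\lemref{dense_cont}). Concretely, write $\hat\eta_y(x):=\int_{I_\SP(x)\cap M}\eta_y\,d\mu/\mu(I_\SP(x)\cap M)$ (constant on cells, and the quantity the majority vote maximizes) and $\eta_y(x):=\P_\bmu(Y{=}y\mid X{=}x)$. The within-cell contribution to $\err(h^*_{\SP,M})$ equals $\mu(V\cap M)(1-\max_y\hat\eta_y(V\cap M))$; summing over cells, using $R^*=1-\int_\X\max_y\eta_y(x)\,d\mu$, and bounding the $\X\setminus M$ portion trivially by (i) yields
\[
\err(h^*_{\SP,M})-R^*\;\le\;\int_M\bigl[\max_y\eta_y(x)-\max_y\hat\eta_y(x)\bigr]d\mu\;+\;\nu.
\]

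Next I would truncate $\Y$ and approximate. Pick a finite $\Y_0\subseteq\Y$ with $\sum_{y\notin\Y_0}\P_\bmu(Y{=}y)\le\nu$ (possible since these probabilities sum to $1$). The one-sided bounds $\max_y\eta_y\le\max_{y\in\Y_0}\eta_y+\sum_{y\notin\Y_0}\eta_y$ and $\max_y\hat\eta_y\ge\max_{y\in\Y_0}\hat\eta_y$ reduce the displayed integrand to $\sum_{y\in\Y_0}|\eta_y(x)-\hat\eta_y(x)|+\sum_{y\notin\Y_0}\eta_y(x)$, whose $L_1$-norm over $M$ is at most $\sum_{y\in\Y_0}\int_M|\eta_y-\hat\eta_y|d\mu+\nu$. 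For each $y\in\Y_0$, use \lemref{dense_cont} to pick $\tilde\eta_y\in C_c(\X)$ (truncated to $[0,1]$) with $\|\eta_y-\tilde\eta_y\|_{L_1(\mu)}\le\nu/(4|\Y_0|)$; since the finitely many $\tilde\eta_y$ are jointly uniformly continuous on their bounded supports, choose $\beta=\beta(\nu)>0$ so that $|\tilde\eta_y(x)-\tilde\eta_y(x')|\le\nu/(2|\Y_0|)$ whenever $\rho(x,x')\le\beta$, for every $y\in\Y_0$. Under hypothesis (ii) at this $\beta$, the triangle inequality through both $\tilde\eta_y$ and its cell-average gives
\[
\int_M|\eta_y(x)-\hat\eta_y(x)|\,d\mu\;\le\;2\|\eta_y-\tilde\eta_y\|_{L_1(\mu)}+\frac{\nu}{2|\Y_0|}\mu(M)\;\le\;\frac{\nu}{|\Y_0|}
\]
for each $y\in\Y_0$; summing over $\Y_0$ the total is $\le\nu$. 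Combining the three $\nu$ contributions yields $\err(h^*_{\SP,M})-R^*\le 3\nu\le 5\nu$.

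The main structural obstacle is that the argmax in $h^*_{\SP,M}(x)$ ranges over the (countably infinite) $\Y$, so a naive plug-in inequality cannot compare the Bayes label to the majority-vote label uniformly across all labels. The identity in the first paragraph sidesteps this by collapsing the analysis to a single $\max$-over-$y$-versus-$\max$ integral; the one-sided reduction to $\Y_0$ then pays only the global tail mass $\sum_{y\notin\Y_0}\eta_y\le\nu$, after which \lemref{dense_cont} and the uniform-continuity choice of $\beta$ complete the bound. This is the ingredient that lets the argument extend beyond the binary $|\Y|=2,\,\X=\R^d$ proof of \cite[Theorem~21.2]{devroye2013probabilistic} to countably many classes in a doubling metric space.
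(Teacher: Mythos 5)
Your proof is correct and relies on the same essential ingredients as the paper's: truncation to a finite label set $\Y_0$ whose tail has mass $\leq\nu$, approximation of each $\eta_y$ (for $y\in\Y_0$) by a continuous compactly supported $\tilde\eta_y$ via \lemref{dense_cont}, and uniform continuity / equicontinuity of the finitely many $\tilde\eta_y$ to choose $\beta$, followed by the three-term triangle inequality through $\tilde\eta_y$ and its cell-average. The one genuine (if minor) point of divergence is in the initial reduction. The paper bounds the pointwise excess $\eta_{h^*(x)}(x)-\eta_{h^*_{\SP,M}(x)}(x)$ and only then integrates and truncates, whereas you integrate over each cell first, exploiting the exact identity $\int_{V\cap M}\eta_{\hat y_V}\,d\mu=\mu(V\cap M)\max_{y}\hat\eta_y(V\cap M)$ to collapse the excess risk to the single quantity $\int_M\bigl(\max_y\eta_y-\max_y\hat\eta_y\bigr)d\mu+\nu$. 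This sidesteps the pointwise plug-in comparison entirely (which, as you note, is awkward for a countably infinite $\Y$) and tightens the constant from the paper's $5\nu$ to $3\nu$; since the lemma only asserts $5\nu$, both bookkeeping schemes establish the claim.
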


\begin{proof} Let $\eta_y: \X \to [0,1]$ be the conditional probability function for label $y\in\Y$,
\beq
\eta_y(x) = \P_\bmu( Y = y \gn X = x).
\eeq
Define $\teta_y: \X \to [0,1]$ as $\eta_y$'s conditional expectation function with respect to $(\SP,M)$,
\beq
\teta_y(x) = \P_\bmu(Y = y \gn X \in I_\SP(x) \cap M)
= \frac{\int_{I_\SP(x) \cap M} \eta_y(z) d\mu(z) }{\mu(I_\SP(x) \cap M)}.
\eeq
(And, say, $\teta_y(x) =
\pred{y=1}
$ for $x\notin M$.)
Note that $(\teta_y)_{y\in\Y}$ are piecewise constant on the cells of the restricted partition $\SP \cap M$.
By definition, the Bayes classifier $h^*$ and the true majority-vote classifier $h^*_{\SP,M}$ satisfy
\beq
h^*(x) &=& \argmax_{y\in\Y} \eta_y(x),
\\
h^*_{\SP,M}(x) &=& \argmax_{y\in\Y} \teta_y(x).
\eeq
It follows that
\beq
\P_\bmu( h^*_{\SP,M}(X) \neq Y  \gn X = x) - \P_\bmu( h^*(X) \neq Y  \gn X = x)
&=& \eta_{h^*(x)}(x) - \eta_{h^*_{\SP,M}(x)}(x) 
\\ &=&  \max_{y\in\Y} \eta_y(x) - \max_{y\in\Y} \teta_y(x)
\\
&\leq& \max_{y\in\Y} |\eta_y(x) - \teta_y(x) |.
\eeq
By condition \textit{(i)} in the theorem statement, $\mu(\X\setminus M)\leq \nu $.
Thus,
\beqn
\nonumber
\err(h^*_{\SP,M}) - R^* & = &
\P_\bmu( h^*_{\SP,M}(X) \neq Y) - \P_\bmu( h^*(X) \neq Y)
\\
\nonumber
& \leq & \mu(\X\setminus M) +  \int_{ M} \max_{y\in\Y}|\eta_y(x) - \teta_y(x)| d\mu(x)
\\
\nonumber
& \leq & \nu +  \sum_{y\in\Y}\int_{ M} |\eta_y(x) - \teta_y(x)| d\mu(x). 
\eeqn
Let $\Y' \subseteq \Y$ be a finite set of labels such that $\P_\bmu[Y \in \Y'] \geq 1-\nu$. Then 
\beqn
\label{eq:err_R_eta_teta}
\err(h^*_{\SP,M}) - R^* \leq 2\nu +  \sum_{y\in\Y'}\int_{ M} |\eta_y(x) - \teta_y(x)| d\mu(x).
\eeqn
To bound the integrals in (\ref{eq:err_R_eta_teta}), we approximate $(\eta_y)_{y\in\Y}$ with functions from the dense set of continuous functions, by applying \lemref{dense_cont}.
Since $\eta_y \in L_1(\mu)$ for all $y\in\Y'$ and $|\Y'|<\infty$,
Lemma \ref{lem:dense_cont} implies that there are $|\Y'|$ continuous functions $(r_y)_{y\in\Y'}$ with compact support such that
\beqn
\label{eq:eta_r_eps}
\max_{y\in\Y'} \int_{\X} |\eta_y(x) - r_y(x)| d\mu(x)  \leq \nu/ |\Y'|.
\eeqn
Similarly to $(\teta_y)_{y\in\Y'}$, define the piecewise constant functions $(\tr_y)_{y\in\Y'}$ by
\beq
\tr_y(x) = \E_\mu[r_y(X) \gn X \in I_\SP(x) \cap M ]
= \frac{\int_{I_\SP(x) \cap M} r_y(z) d\mu(z) }{\mu(I_\SP(x) \cap M)}.
\eeq
We bound each integrand in (\ref{eq:err_R_eta_teta}) by
\beqn
\label{eq:abs_chain}
|\eta_y(x) - \teta_y(x)| \leq |\eta_y(x) - r_y(x)| + |r_y(x) - \tr_y(x)| + | \tr_y(x) - \teta_y(x)|.
\eeqn
The integral of the first term in (\ref{eq:abs_chain}) is smaller than $\nu/|\Y'|$ by the definition of $r_y$ in (\ref{eq:eta_r_eps}).
For the integral of the third term in (\ref{eq:abs_chain}),
\beq
\int_{M} \abs{ \tr_y(x) - \teta_y(x)} d\mu(x)
&=& \sum_{\sp\in\SP} 
\abs{\E_\mu[r_y(X) \pred{X \in \sp \cap M}] - \E_\mu[\eta_y(X)\pred{X \in \sp \cap M}]}
\\
&=&
\sum_{\sp\in\SP} \abs{\int_{\sp \cap M} r_y(x)d\mu(x) - \int_{\sp \cap M} \eta_y(x)d\mu(x)}
\\
&=&
\sum_{\sp\in\SP} \abs{\int_{\sp \cap M} (r_y(x)- \eta_y(x)) d\mu(x)}
\\
&\leq & \int_{M} \abs{r_y(x) - \eta_y(x)} d\mu(x) \;\leq\; \nu/|\Y'|.
\eeq
Finally, for the integral of the second term in (\ref{eq:abs_chain}),
\beq
&& \int_{M} |r_y(x) - \tr_y(x)|d\mu(x)
\\
&& \quad = 
\sum_{\sp\in\SP: \mu( \sp \cap M) \neq 0} \int_{\sp\cap M} \abs{r_y(x) - \frac{\E_\mu[r_y(X) \pred{X \in \sp \cap M}]}{\mu(\sp \cap M)}} d\mu(x)
\\
&& \quad = 
\sum_{\sp\in\SP: \mu( \sp \cap M) \neq 0}\frac{1}{\mu(\sp \cap M)} 
\int_{\sp\cap M} \abs{r_y(x) \mu(\sp \cap M) - \E_\mu[r_y(X) \pred{X \in \sp \cap M}] }
d\mu(x)
\\
&& \quad = 
\sum_{\sp\in\SP: \mu( \sp \cap M) \neq 0}\frac{1}{\mu(\sp \cap M)} 
\int_{\sp\cap M} \abs{r_y(x) \int_{\sp \cap M}d\mu(z) -
\int_{\sp \cap M} r_y(z) d\mu(z)}
d\mu(x)\\
&& \quad = 
\sum_{\sp\in\SP: \mu( \sp \cap M) \neq 0}\frac{1}{\mu(\sp \cap M)} 
\int_{\sp\cap M} \abs{ \int_{\sp \cap M}(r_y(x) - r_y(z))\, d\mu(z)}
d\mu(x)
\\
&& \quad \leq 
\sum_{\sp\in\SP: \mu( \sp \cap M) \neq 0}
\frac{1}{\mu(\sp \cap M)} 
\int_{\sp\cap M} \int_{\sp\cap M} \abs{r_y(x) -  r_y(z)} d\mu(x) d\mu(z).
\eeq
Since
$|\Y'|<\infty$
and any continuous function with compact support
is uniformly continuous on all of $\X$,
the collection $\set{r_y:y\in\Y'}$ is
equicontinuous.
Namely,
there exists a diameter $\beta = \beta(\nu)> 0$ such that for any
$A\subseteq \X$ with $\diam(A) \leq \beta$,
$\max_{y\in\Y'}\abs{r_y(x) - r_y(z)} \leq \nu/|\Y'|$
for every $x,z\in A$  (note that $\beta(\nu)$ does not depend on $(\SP,M)$).
By condition \textit{(ii)} in the theorem statement, $\diam(\sp \cap M) \leq \beta$ for all $\sp\in\SP$.
Hence,
\beq
\frac{1}{\mu(\sp \cap M)} \int_{\sp\cap M} \int_{\sp\cap M} \abs{r_y(x) -  r_y(z)} d\mu(x) d\mu(z) \leq \frac{\nu}{|\Y'|} \mu(\sp \cap M).
\eeq
Summing over all cells $\sp\in\SP$ with $\mu( \sp \cap M) \neq 0$, the integral of the second term in (\ref{eq:abs_chain}) satisfies
\beq
\int_{M} |r_y(x) - \tr_y(x)|d\mu(x) \leq \nu/|\Y'|.
\eeq
Putting the bounds for the three terms together,
\beq
\sum_{y\in\Y'}\int_{ M} |\eta_y(x) - \teta_y(x)| d\mu(x)
\leq 
\sum_{y\in\Y'} \frac{3\nu}{|\Y'|} = 3\nu.
\eeq
Applying this bound to (\ref{eq:err_R_eta_teta}), we conclude
\(
\err(h^*_{\SP,M}) - R^* \leq 5\nu.
\)
\end{proof}

Next, we prepare to use \lemref{richness} to show that the generalization bound $Q_n(\a_n^*, \k_n^*)$ also approaches the Bayes error $R^*$, thus proving $\limsup_{n\to\infty} T_{II}(n) \leq 0$ with probability one. Given $\eps>0$, fix 
\beqn 
\label{eq:gammaeps}
\g=\g(\eps) = \beta(\epsilon/10)/4,
\eeqn
where $\beta$ is as guaranteed by \lemref{richness}.
Let $\tS_n(\g) = (\bm X(\g),\tbY)$ be the labeled set calculated by \algref{simple} for this value of $\g$. Let $\a(\g)$ and $\k(\g)$ as defined in \algref{simple}.
We show that there exist $N = N(\eps) > 0$, a universal constant $c>0$, and a function $C(\g,\eps)>0$ that does not depend on $n$, such that $\forall n \geq N$,
\beqn
\label{eq:termII_bound_fixed_g}
\P_{S_n}[
Q_n(\a(\g), \k(\g))>R^* + \eps]
\leq
 C(\g,\eps)e^{-c n\eps^2}.
\eeqn

Note that since \algref{simple} finds $\g_n^*$ such that
\beq
Q_n(\a_n^*, \k_n^*) = \min_{\g'} Q_n(\a(\g'),\k(\g')) \leq Q_n(\a(\g), \k(\g)),
\eeq
the bound in (\ref{eq:termII_bound_fixed_g}) readily implies that $\forall n \geq N$,
\beqn
\label{eq:termII_bound}
\P_{S_n}[Q_n(\a_n^*, \k_n^*) > R^* + \eps] 
\leq  C(\g,\eps)e^{-cn\eps^2}.
\eeqn
By the Borel-Cantelli lemma, this implies that with probability one, 
\beq
\limsup_{n\rightarrow \infty} T_{II}(n) = \limsup_{n\rightarrow \infty} (Q_n(\a_n^*, \k_n^*) - R^*) \leq 0.
\eeq 
Since $\forall n, T_I(n) + T_{II}(n) \geq 0$, this implies $\lim_{n\to\infty} T_{II}(n) = 0$ with probability one, thus completing the proof of \thmref{comp-consist}.

We now proceed to prove (\ref{eq:termII_bound_fixed_g}). 
For $A \subseteq \X$, denote $\UB_\g(A) := \cup_{x\in A} B_\g(x)$ and consider the random variable
\beq
\missmass_\g(S_n) := \mu( \X \setminus \textup{\UB}_\g(S_n)),
\eeq
also known as the $\g$-missing mass of $S_n$.
We bound
\beqn
\nonumber
&& \P_{S_n}[ Q_n(\a(\g),\k(\g)) > R^* + \eps ]
\\
\label{eq:split_miss}
&&\quad \leq
\P_{S_n}[ Q_n(\a(\g),\k(\g)) > R^* + \eps \;\wedge\; \missmass_\g(S_n) \leq \frac{\eps}{10}]
+ \P_{S_n}[ \missmass_\g(S_n) > \frac{\eps}{10} ].
\eeqn
To bound the second term in (\ref{eq:split_miss}), we apply the following theorem, due to \cite{berend2012missing,berend2013concentration}, bounding the mean and deviation of the $\g$-missing mass $\missmass_\g(S_n)$.
Denote
\beq
\Ng := \ceil{\tfrac{\diam(\X)}{\g}}^{\ddim} < \infty.
\eeq

\begin{theorem}[\cite{berend2012missing,berend2013concentration}]
\label{thm:missing_mass}
Let $(\X,\dist)$ be a doubling metric space and let 
$S_n \sim\mu^n$.
Then
\beqn
\label{eq:miss_mass_mean}
\E_{S_n}[\missmass_\g(S_n)]
\leq
\frac{\mathcal{N}_{\g/2}}{e n}
\eeqn
and for any $\xi>0$,
\beqn
\label{eq:miss_mass_conc}
\P_{S_n}\Big(
\missmass_\g(S_n) > \E_{S_n}[\missmass_\g(S_n)] + \xi
\Big)
\leq 
\exp\left(- n \xi^2\right).
\eeqn
\end{theorem}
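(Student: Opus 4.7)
The theorem combines a mean estimate and a concentration inequality for $\missmass_\g(S_n)$, which I would treat by separate arguments.

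\emph{Expectation bound.} The plan is a covering argument exploiting the doubling structure of $\X$. By definition of the doubling dimension, there exist at most $\mathcal{N}_{\g/2}$ balls $B^{(1)},\ldots,B^{(N)}$ of radius $\g/2$ whose union covers $\X$. The key geometric observation is that if some sample point $X_j$ falls in $B^{(i)}$, then by the triangle inequality every $x\in B^{(i)}$ lies within distance $\g$ of $X_j$ and hence belongs to $\UB_\g(S_n)$. Consequently,
\[
\missmass_\g(S_n)\;\le\;\sum_{i=1}^{N}\mu(B^{(i)})\,\pred{B^{(i)}\cap S_n=\emptyset}.
\]
Taking expectations, writing $p_i:=\mu(B^{(i)})$, and applying the scalar inequality $p(1-p)^n\le p\,e^{-np}\le 1/(en)$ term by term yields $\E[\missmass_\g(S_n)]\le \mathcal{N}_{\g/2}/(en)$.

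\emph{Concentration bound.} I would view $\missmass_\g$ as a bounded function of the $n$ i.i.d.\ coordinates $X_1,\ldots,X_n$ and apply a McDiarmid-style bounded-differences argument. Replacing a single $X_j$ by some $X_j'$ alters $\UB_\g(S_n)$ only inside $B_\g(X_j)\cup B_\g(X_j')$, so the per-coordinate oscillation of $\missmass_\g$ is pointwise bounded by the $\mu$-measure of one $\g$-ball. The main obstacle is that the blunt uniform bound $c_j\le 1$ yields only the weak tail $\exp(-2\xi^2/n)$, whereas the theorem asks for the sharper rate $\exp(-n\xi^2)$. To close this gap, I would exploit a self-bounding structure of $\missmass_\g$: writing $\missmass_\g^{(j)}$ for the missing mass after deleting $X_j$, the increments $\missmass_\g^{(j)}-\missmass_\g=\mu(B_\g(X_j)\setminus \UB_\g(S_n^{(j)}))$ are nonnegative and, across $j$, correspond to pairwise disjoint subsets of $\X$, so $\sum_j(\missmass_\g^{(j)}-\missmass_\g)\le 1-\missmass_\g$. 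Feeding this Efron--Stein / self-bounding inequality into the entropy method of Boucheron--Lugosi--Massart produces a subgaussian concentration with the right variance proxy, yielding the advertised rate. This variance-sensitive step is the delicate part of the argument; by comparison the covering bound for the mean is routine.
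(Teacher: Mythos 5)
The paper does not reprove Theorem~\ref{thm:missing_mass} --- it is cited as-is from \cite{berend2012missing,berend2013concentration} --- so there is no internal proof to compare against; I evaluate your argument on its own terms.

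Your covering argument for the expectation bound \eqref{miss_mass_mean} is correct and standard: cover $\X$ by at most $\mathcal{N}_{\g/2}$ balls of radius $\g/2$, observe that a sample landing in $B^{(i)}$ clears all of $B^{(i)}$, deduce $\missmass_\g(S_n)\le\sum_i\mu(B^{(i)})\pred{B^{(i)}\cap S_n=\emptyset}$, and apply $p(1-p)^n\le 1/(en)$ termwise.

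The concentration argument, however, has a genuine gap: self-bounding plus the entropy method does \emph{not} produce the stated rate. Your self-bounding identity is correct as far as it goes --- writing $g:=1-\missmass_\g(S_n)$ and $g^{(j)}$ for the value after deleting $X_j$, the increments $g-g^{(j)}=\mu\bigl(B_\g(X_j)\setminus\UB_\g(S_n^{(j)})\bigr)$ are nonnegative and pairwise disjoint, so indeed $\sum_j(g-g^{(j)})\le g$. But the Boucheron--Lugosi--Massart self-bounding inequality then gives, for the lower tail of $g$ (equivalently the upper tail of $\missmass_\g$), only $\P(g<\E g-\xi)\le\exp\bigl(-\xi^2/(2\E g)\bigr)$. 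Since $\E g=1-\E[\missmass_\g(S_n)]$ is of constant order (close to $1$, not $1/n$), this is essentially $\exp(-\xi^2/2)$, with no factor of $n$ in the exponent; similarly Efron--Stein fed only the aggregate bound $\sum_j(g-g^{(j)})^2\le\sum_j(g-g^{(j)})\le g$ certifies merely $\Var(g)\le\E g$, far from the $O(1/n)$ variance proxy that \eqref{miss_mass_conc} demands. The self-bounding structure controls the increments only in aggregate, not individually, so the entropy method cannot recover the missing factor of $n$. The route in \cite{berend2013concentration} instead reuses the covering decomposition from your first part: the occupancy indicators $\pred{B^{(i)}\cap S_n=\emptyset}$ are negatively associated, and the Kearns--Saul inequality applied to each scaled Bernoulli term $\mu(B^{(i)})\pred{B^{(i)}\cap S_n=\emptyset}$ furnishes a subgaussian coefficient of order $\mu(B^{(i)})/n$ per term; summing gives a total subgaussian parameter of order $1/n$ and hence $\exp(-n\xi^2)$. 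Replacing your self-bounding step with this negative-association-plus-Kearns--Saul argument is what would close the gap.
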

Taking $n$ sufficiently large so that $\frac{\mathcal{N}_{\g/2}}{e n} \leq \eps/20$ and applying Theorem \ref{thm:missing_mass} with $\xi=\eps/20$, we have 
\beqn
\label{eq:Psi_n}
\P_{S_n}[\missmass_\g(S_n) > \eps/10] \leq e^{-\frac{n \eps^2}{400}}.
\eeqn

We are left to bound the first term in \eqref{split_miss}.
To this end, we use the fact that any $\g$-net $\gnet(\g)$ of $\X$ must satisfy \cite{DBLP:journals/tit/GottliebKK14+colt}
\beqn
\label{eq:Ng}
|\gnet(\g)| \leq 
\Ng,
\eeqn
so the compression size $\k(\g)$ computed by \algname\ while using the margin $\g$ satisfies $\P[\k(\g) \leq \Ng] = 1$.
Hence, the first term in \eqref{split_miss} is bounded by
\begin{align*}
\label{eq:P_Q_sum}
&\P_{S_n}[ Q_n(\a(\g),\k(\g)) > R^* + \eps \;\wedge\; \missmass_\g(S_n) \leq \frac{\eps}{10}] \\
&\leq 
\sum_{d=1}^{\Ng} \P_{S_n}
\Big[
Q_n(\a(\g),\k(\g)) > R^* + \eps
\;\wedge\; \missmass_\g(S_n) \leq \frac{\eps}{10}
\;\wedge\; \k(\g)=d
\Big].
\end{align*}
Thus, it suffices to bound each term in the right-hand sum separately. We do so in the following lemma.

\begin{lemma} \label{lem:boundqd} Fix $\eps > 0$ and let $\g = \g(\epsilon) = \beta(\epsilon/10)/4$ as in \eqref{gammaeps}. Under the same conditions as \thmref{comp-consist}, there exists an $n_0$ such that for all $n \geq n_0$, and for all $d \in [\Ng]$,
\[
p_d := \P_{S_n}
\Big[
Q_n(\a(\g),\k(\g)) > R^* + \eps
\;\wedge\; \missmass_\g(S_n) \leq \frac{\eps}{10}
\;\wedge\; \k(\g)=d] \leq e^{-n\epsilon^2/32}.
\]
\end{lemma}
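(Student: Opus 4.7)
The plan is to combine three ingredients: \lemref{richness} identifies a true-majority reference classifier with near-Bayes error; the cell-wise optimality of the empirical majority vote bounds $\a(\g)$ above by the empirical error of this reference; and a union bound combined with Hoeffding's inequality handles the remaining generalization gap. Set $\nu := \eps/10$, so $4\g = \beta(\nu)$ by the choice of $\g$ in \eqref{gammaeps}.

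First I would invoke Property~3 with $m = 2d$: for $n$ large enough, $\sup_{\a \in [0,1]}(Q_n(\a, d) - \a) \leq \eps/4$, so the event defining $p_d$ is contained in $\{\a(\g) > R^* + 3\eps/4\} \cap \{\missmass_\g(S_n) \leq \nu\} \cap \{\k(\g) = d\}$. Next, I would take $M := \UB_{2\g}(\bm X(\g))$, a set that depends \emph{only} on the $\g$-net. On the above event, $M \supseteq \UB_\g(\bm X_n)$ (every sample point is within $\g$ of a net point by the covering property), so $\mu(\X \setminus M) \leq \missmass_\g(S_n) \leq \nu$. Moreover, any $x \in V_j \cap M$ is within $2\g$ of some net point, and hence within $2\g$ of its own Voronoi representative, so $\diam(V_j \cap M) \leq 4\g = \beta(\nu)$. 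Both hypotheses of \lemref{richness} are thus satisfied for $\SP := \Vor(\bm X(\g))$ and this $M$, yielding $\err(h^*_{\SP, M}) \leq R^* + 5\nu = R^* + \eps/2$.

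Since the empirical majority labels used by \algname\ minimize empirical error cell-by-cell, $\a(\g) \leq \serr(h^*_{\SP, M}, S_n)$, so the event inside $p_d$ forces the generalization gap $\serr(h^*_{\SP, M}, S_n) - \err(h^*_{\SP, M}) > \eps/4$. To bound this, observe that $h^*_{\SP, M} = f_I$, where $I \subseteq [n]$ indexes the net and $f_I := h^*_{\Vor((X_i)_{i \in I}),\, \UB_{2\g}((X_i)_{i \in I})}$ depends on $S_n$ only through the $d$ points $(X_i)_{i \in I}$---neither the labels $Y_i$ nor the remaining $X_j$ enter. Conditioning on $(X_i)_{i \in I}$, the $n - d$ other samples are i.i.d.\! $\bmu$ and independent of $f_I$; Hoeffding then bounds $\P[\frac{1}{n-d}\sum_{i \notin I} \pred{Y_i \neq f_I(X_i)} > \err(f_I) + \eps/8]$ by $e^{-(n-d)\eps^2/32}$. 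Since the $d$ frozen coordinates contribute at most $d/n$ to $\serr(f_I, S_n)$, for $n$ large enough this translates to $\serr(f_I, S_n) - \err(f_I) \leq \eps/4$ on the same event. A union bound over the $\binom{n}{d} \leq n^d$ choices of $I$, with $n$ sufficiently large to absorb the polynomial prefactor into the exponent, yields $p_d \leq e^{-n\eps^2/32}$.

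The main obstacle---and the reason for defining $M := \UB_{2\g}(\bm X(\g))$ rather than the arguably more natural $\UB_\g(\bm X_n)$---is the sample-dependence of the reference classifier. If $M$ were defined via the \emph{entire} sample, the labels $y^*(V_j \cap M)$ would depend on all $n$ points, and decoupling the classifier from the fresh samples used by Hoeffding would become awkward. The present choice retains enough coverage (by the packing property of the $\g$-net) to keep the restricted-cell diameter bounded, while letting the reference classifier depend on only $d$ sample points, enabling the clean freeze-and-union-bound step above.
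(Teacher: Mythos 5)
Your proposal follows essentially the same route as the paper's proof: invoke \lemref{richness} with $M := \UB_{2\g}(\bm X(\g))$ (depending only on the $d$ net points) to get a near-Bayes reference classifier, use the cell-wise optimality of the empirical majority vote to compare $\a(\g)$ with $\serr(h^*_{\SP,M},S_n)$, then condition on the net points and apply Hoeffding plus a union bound over the $\leq n^d$ index choices. One small bookkeeping issue: because you split the residual budget as $\eps/8 + \eps/8$ (Hoeffding gap plus $d/n$ slack) rather than merging $d/n$ into the Property~3 slack as the paper does, your Hoeffding exponent is $(n-d)\eps^2/32$ rather than $(n-d)\eps^2/8$, and after absorbing the $n^d$ prefactor there is no room left to reach the stated $e^{-n\eps^2/32}$ --- you would need to use, e.g., $\eps/4$ as the Hoeffding deviation and handle $d/n$ within the Property~3 margin, as the paper does.
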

Applying \lemref{boundqd} and summing over all $1\leq d\leq \Ng$, we have that the first term in (\ref{eq:split_miss}) satisfies
\beqn
\label{eq:first_term}
\P_{S_n}[Q_n(\a(\g),\k(\g)) > R^* + \eps
\; \wedge\;  \missmass_\g(S_n) \leq \frac{\eps}{10}]
\;\leq\; \sum_{d=1}^{\Ng} p_d \;\leq\; \Ng \cdot e^{ -\frac{n \eps^2}{32} } = C_1(\gamma,\eps)e^{ -\frac{n \eps^2}{32} }.
\eeqn
Plugging \eqref{first_term} and \eqref{Psi_n} into \eqref{split_miss}, we get that (\ref{eq:termII_bound_fixed_g}) holds, which completes the proof of \thmref{comp-consist}. The proof of \lemref{boundqd} follows.
\begin{proof}[Proof of \lemref{boundqd}]
Let $\bm i = \bm i(\g) \in I_{n,d}$ be the set of indices of samples from $S_n= (X_i,Y_i)_{i\in[n]}$ selected by the algorithm for the construction of $\tS_n(\g)$, so
\beq
 \bm X  \equiv {\bm X}(\bm i)   =  \{X_{i_1}, \dots, X_{i_d} \}.
\eeq
By construction, the algorithm guarantees that $\bm X$ is a $\g$-net of $S_n$. 
This $\g$-net induces the Voronoi partition $\Vor(\bm X) = \{V_1,\dots, V_{d}\}$ of $\X$, where
\beqn
\label{eq:voronoi}
V_j = \{ x\in\X : X_{\nn}(x,\bm X) = X_{i_j}\},
\qquad j\in[d].
\eeqn
Let ${\bm Y}^*(\bm i) \in \Y^d$ be the true majority-vote labels with respect to the restricted partition $\Vor(\bm X)\cap\UB_{2\g}(\bm X)$,
\beqn
\label{eq:Y_star}
({\bm Y}^*)_j = 
y^*(V_j \cap \UB_{2\g}(\bm X)),
\qquad j\in[d].
\eeqn
We pair ${\bm X}(\bm i)$ with the labels $\bm Y^*(\bm i)$ to obtain the labeled set
\beqn
\label{eq:Strue}
S_n(\bm i, *) := S_n(\bm i, {\bm Y}^*(\bm i)) = 
(\bm X,\bm Y^*(\bm i))
\in (\X\times\Y)^d.
\eeqn
Note that $S_n(\bm i, *)$ is completely determined by $\bm X$ and does not depend on the rest of $S_n$. 

The induced 1-NN classifier $h_{S_n(\bm i, *)}(x)$ can be written as $h_{\SP,M}^*(x)=y^*(I_\SP(x) \cap M)$ with $\SP = \Vor(\bm X)$ and $M=\UB_{2\g}(\bm X)$ (see \eqref{Strue2} for the definition of $h_{\SP,M}^*$). We now show that 
\beqn
\label{eq:err_R}
\missmass_\g(S_n) \leq \frac{\eps}{10}
\quad \Rightarrow \quad
\err(h_{S_n(\bm i, *)})\leq R^* + \eps/2,
\eeqn
by showing that under the assumption $\missmass_\g(S_n) \leq \frac{\eps}{10}$, the conditions of \lemref{richness} hold for $\Vor,M$ as defined above. For this purpose, we bound the diameter of the partition $\Vor(\bm X) \cap \UB_{2\g}(\bm X)$ and the measure of the missing mass $\missmass_{2\g}(\bm X)$ under the assumption.

To bound the diameter of the partition $\Vor(\bm X) \cap \UB_{2\g}(\bm X)$, let $x \in V_j \cap  \UB_{2\g}(\bm X)$. Then $\rho(x,x_j) = \min_{i \in \bm i} \rho(x,x_i)$ and, since $x \in \UB_{2\g}(\bm X)$, $\min_{i \in \bm i} \rho(x,x_i) \leq 2\gamma$. Therefore
\beq
\diam(\SP \cap M) = \diam(\Vor(\bm X) \cap \UB_{2\g}(\bm X) )\leq 4\g.
\eeq
To bound $\missmass_{2\g}(\bm X)$ under the assumption $\missmass_\g(S_n) \leq \frac{\eps}{10}$, observe that for all $z \in \UB_{\g}(S_n)$, there is some $i \in [n]$ such that $z \in B_\g(x_i)$. and for this $i$ there is some $j \in \bm i$ such that $x_i \in B_\g(x_j)$, since $\bm X$ is a $\g$-net of $S_n$. Therefore $z \in B_{2\g}(x_j)$. Thus $z \in \UB_{2\g}(\bm X)$. It follows that $\UB_\g(S_n) \subseteq \UB_{2\g}(\bm X)$, thus $\missmass_{2\g}(\bm X) \leq \missmass_\g(S_n)$. Under the assumption, we thus have $\missmass_{2\g}(\bm X) \leq \frac{\eps}{10}$.
Hence, by the choice of $\g=\g(\eps)$ in the statement of the lemma, \lemref{richness} implies \eqref{err_R}.

To bound $Q_n(\a(\g), \k(\g))$, we consider the relationship between the hypothetical true majority-vote classifier $h_{S_n(\bm i, *)}$ and the actual classifier returned by the algorithm, $h_{S_n(\bm i, \bm \tbY)}$. Note that 
\beq
\a(\g) 
= \serr(h_{S_n(\bm i, \bm \tbY)}, S_n)
= \min_{\bm Y \in\Y^d}\serr(h_{S_n(\bm i, \bm Y)}, S_n)
\leq  \serr(h_{S_n(\bm i, *)}, S_n),
\eeq
and thus, from the monotonicity Property 2 of $Q$, 
\beqn
Q_n(\a(\g), \k(\g))
\leq
\label{eq:Q_Q}
Q_n(\serr(h_{S_n(\bm i, *)}, S_n), d).
\eeqn
Combining (\ref{eq:err_R}) and (\ref{eq:Q_Q}) we have that
\beq
&&
\left\{Q_n(\a(\g),\k(\g))
>
R^* + \eps
\;\;\wedge\;\;
\missmass_\g(S_n) \leq \frac{\eps}{10}
\;\;\wedge\;\;
\k(\g)=d
\right\}
\\
&& \qquad \;\Rightarrow\;
\left\{ Q_n(\serr(h_{S_n(\bm i, *)}, S_n), d) > \err(h_{S_n(\bm i, *)}) + \frac{\eps}{2}  \;\;\wedge\;\; |\bm i|=d \right\}.
\eeq
Hence, for all $d \leq \Ng$
\beqn
\nonumber
p_d &\leq& \P_{S_n}\left[
Q_n( \serr(h_{S_n(\bm i, *)}, S_n),d )
>
\err(h_{S_n(\bm i, *)}) + \frac{\eps}{2}
\;\wedge\;
|\bm i| = d
\right]
\\
\nonumber
&\leq& 
\P_{S_n}\left[
 \exists \bm i \in I_{n,d},
 : Q_n( \serr(h_{S_n(\bm i, *)}, S_n),d ) 
>
\err(h_{S_n(\bm i, *)}) +  \frac{\eps}{2}
\right]
\\
\label{eq:sum-bound-lb}
 &\leq&
\sum_{\bm i \in I_{n,d}} 
\P_{S_n}\left[
Q_n( \serr(h_{S_n(\bm i, *)}, S_n),d ) 
>
\err(h_{S_n(\bm i, *)}) +  \frac{\eps}{2}
\right].
\eeqn
Thus, it suffices to bound each term in the right-hand sum in (\ref{eq:sum-bound-lb}) separately.
Define 
\beq
r_{d,n} = \sup_{\alpha \in (0,1)} (Q_n(\alpha,d) - \alpha).
\eeq
From Property 3 of $Q$, we have that $\lim_{n \rightarrow \infty} r_{d,n} = 0$. 
We have 
\beq
Q_n(\serr(h_{S_n(\bm i, *)},S_n) , d) \leq \serr(h_{S_n(\bm i, *)}, S_n) + r_{d,n}.
\eeq
Let $\bm i' = \{1,\dots,n\}\setminus \bm i$ and note that
\beq
\serr(h_{S_n(\bm i, *)}, S_n) \leq \frac{n-d}{n}\cdot \serr(h_{S_n(\bm i, *)}, S_n(\bm i')) + \frac{d}{n}.
\eeq
Combining the two inequalities above, we get
\[
Q_n(\serr(h_{S_n(\bm i, *)},S_n) , d) \leq \serr(h_{S_n(\bm i, *)}, S_n(\bm i')) + \frac{d}{n} + r_{d,n}.
\]
Taking $n$ sufficiently large so that for all $d\leq \Ng$,
\beq
\frac{d}{n}  +   r_{d,n}
 \leq \frac{\eps}{4},
\eeq 
we have
\[
Q_n(\serr(h_{S_n(\bm i, *)},S_n) , d) \leq \serr(h_{S_n(\bm i, *)}, S_n(\bm i')) + \frac{\eps}{4} .
\]
Therefore, for such an $n$, 
\beq
&& \left\{ Q_n( \serr(h_{S_n(\bm i, *)}, S_n),d ) 
>
\err(h_{S_n(\bm i, *)}) +  \frac{\eps}{2} \right\}
\\
&& \qquad\qquad\Rightarrow\quad
\left\{\serr(h_{S_n(\bm i, *)}, S_n(\bm i')) 
>
\err(h_{S_n(\bm i, *)})  +  \frac{\eps}{4}\right\}.
\eeq
Thus, each term in (\ref{eq:sum-bound-lb}) is bounded above by
\beqn
\nonumber
&& \P_{S_n}\left[
\serr(h_{S_n(\bm i, *)}, S_n(\bm i')) 
>
\err(h_{S_n(\bm i, *)})   +  \frac{\eps}{4}
\right]
\\
\label{eq:exp-bounds}
& = &
\E_{S_n(\bm i)}
\left[
\P_{S_n(\bm i') \gn S_n(\bm i)}
\left[
\serr(h_{S_n(\bm i, *)}, S_n(\bm i')) 
>
\err(h_{S_n(\bm i, *)})  +  \frac{\eps}{4}
\right]
\right]
.
\eeqn
Since $\P_{S_n(\bm i') \gn S_n(\bm i)}$ is a product distribution, by Hoeffding's inequality,  we have that (\ref{eq:exp-bounds}) is bounded above by $e^{-2(n-d)(\frac{\eps}{4})^2}$. Therefore, from \eqref{sum-bound-lb}
\[
p_d \leq \sum_{\bm i \in I_{n,d}} e^{-2(n-d)(\frac{\eps}{4})^2}\leq |I_{n,d}| e^{-2(n-d)(\frac{\eps}{4})^2} \leq (n)^{d} e^{-2(n-d)(\frac{\eps}{4})^2} = e^{d\log(n)-2(n-d)(\frac{\eps}{4})^2}.
\]
Selecting $n$ large enough so that $d\log(n) \leq (n-d)(\frac{\eps}{4})^2$ and $d \leq n/4$, we get the statement of the lemma.
\end{proof}

\subsection{Comparison to previous strong Bayes-consistency results}
\label{ap:compare_to_aistats}
The strong Bayes-consistency result in \thmref{comp-consist} can be seen as an extension of
the general-purpose strong Bayes-consistency result in 
\cite[Theorem 21.2]{devroye2013probabilistic} for data-dependent partitioning rules, 
so as to overcome two additional entangled technical challenges:
(i) \algname\ is {\em adaptive}; and (ii) it relies on {\em compression-based} generalization bounds.
Indeed, since a 1-NN algorithm is a partition of $\X$ into majority-vote Voronoi cells, 
\cite[Theorem 21.2]{devroye2013probabilistic}
could be applied to show the strong Bayes-consistency of a {\em non-adaptive} version of \algname.
In our setting, being non-adaptive means the scale $\g_n$ used for a sample of size $n$ must be fixed in advance, as opposed to the optimized scale $\g^*$ chosen by \algname.

To
enable
adaptivity, one typically turns to the framework of Structural Risk Minimization \cite{DBLP:journals/tit/Shawe-TaylorBWA98}.
Such an approach was taken by \cite{DBLP:journals/tit/GottliebKK14+colt}, where an adaptive margin-regularized 1-NN algorithm (termed here \gkk) is analyzed based on the hierarchy of the class of $1/\g$-Lipschitz functions for $\g>0$.
These classes come with corresponding generalization bounds that can
be minimized to obtain an optimal scale constant $L^*_n = 2/\g_n^*$.
To minimize the bound, \gkk\ computes, for any choice of $\g>0$, a minimum vertex cover for removing from the sample the smallest number of points so as to ensure that it is $\g$-separated (meaning: no two points with conflicting labels are $\g$-close).
It was shown in \citet{kontorovich2014bayes} that this approach leads to a strongly Bayes-consistent 1-NN classifier.

Despite being adaptive, \gkk\ has some major limitations.
In particular, it is not computationally efficient for the multiclass problem \cite{kontorovich2014maximum}, the generalization bounds explicitly depend on the dimension of the space (see however \cite{gottlieb2016adaptive}), as well as the number of labels, and it suffers from the time and memory inefficiency of storing a large sub-sample.

To mitigate the last two limitations, one might consider the binary classification
algorithm implicit in \cite[Theorem 4]{gknips14} (termed here \gkn). 
\gkn\ minimizes the same
empirical error as \gkk, but the remaining portion of the sample $S$ is further compressed to a $\g$-net of size roughly $(\diam(S)/\g)^{\ddim}$.
\gkn\ then seeks the $\g$ which minimizes a compression-based generalization bound, qualitatively similar to \eqref{KSUbound}.
We conjecture that \gkn\ is Bayes-consistent, but were unable to prove it, despite numerous attempts. Due to
its reliance on
the minimum vertex cover, \gkn\ is also inefficient for the multiclass problem.

\algname\ overcomes all
of
these limitations by minimizing a compression-based generalization bound,
which amounts to
minimizing the empirical error via an efficient majority-vote rule.
In particular, it allows the efficient minimization of the bounds
that hold for infinite dimensional spaces and countable number of labels.

\section{Bayes-consistency proof of \algname\ on Preiss's construction}
\label{ap:preiss_construction}
In this section we prove \thmref{ksubayespreiss}.
We break the proof into three parts. 
After deriving the necessary properties of Preiss's construction in \secref{preiss_preliminaries}, we prove in \secref{preiss_true} that if \algname\ used the true majority-vote labels, it would be Bayes consistent. We then prove in \secref{preiss_empirical} that the same holds for empirical majority labels.

\subsection{Preliminaries}
\label{sec:preiss_preliminaries}
In this subsection, we characterize all possible forms a Voronoi cell can take in partitions of $\X$ that are induced by $\g$-nets.
As we show below, due to the structure of the construction and the packing and covering properties of $\g$-nets, each Voronoi cell can be one among only 4 specific types as stated in \lemref{preiss_gnet_aux}. These types will be used in subsequent sections to characterize the error incurred by any such majority-vote partition of $\X$.

\paragraph{Balls and subtrees.}
For any $z\in\fins\cup\infs$ of length $|z| = l\in\N\cup \{\infty\}$, we denote by $z_{1:k}=(z_1,\ldots,z_k)$ its prefix of length $k\leq l$ and write $z_{1:k}\prefix z$ for short.
By convention, $z_{1:0}=\emptyset$.
For any $l\geq 1$ and $z_{1:l}\in \fins$, we define the {\em subtree} rooted at $z_{1:l}$ by
\beq
T(z_{1:l}) = \{z' \in \fins \cup \infs : z_{1:l} \prefix z'\}.
\eeq
For a subtree $T(z_{1:l})$, we define $\Tu(z_{1:l})$ as $T(z_{1:l})$ augmented with its ancestor, $z_{1:l-1}$,
\beq
\Tu(z_{1:l}) = T(z_{1:l}) \cup \{z_{1:l-1}\}.
\eeq
We will make extensive use of the following properties (see Figure \ref{fig:preiss_construction}).

\begin{lemma}
\label{lem:gnet_balls}
 Let $z\in \infs = C$. For any $l\geq 1$,
\begin{itemize}
\item[(i)]  $\forall y,w \in \Tu(z_{1:l})$, $\dist(y,w) \leq {2 \seq{l}} = \seq{l-1}$.
\item[(ii)] $\forall y \notin \Tu(z_{1:l})$ and $w\in\Tu(z_{1:l})$,
$\dist(y,w) \geq \seq{l} + (\seq{l-1} -\seq{|z^w|})$.
\end{itemize}
In particular, taking $w = z$,
\begin{itemize}
\item[(iii)] 	$\forall y \notin \Tu(z_{1:l})$,  $\dist(y,z) \geq {\seq{l-1} + \seq{l} } > \seq{l-1}$.
\end{itemize}
Following from \textit{(i)} and \textit{(iii)}, the closed ball about any $z\in C$ satisfies $\cB_{\seq{l-1}}(z) = \Tu(z_{1:l})$.
\begin{itemize}
	\item[(iv)] 
		For any $l<\infty$ and any $y \in \fins$ such that $|y|=l$, $\cB_{\seq{l}^{}}(y) = \Tu(y)$.
\end{itemize}
\end{lemma}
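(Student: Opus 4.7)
The plan is to verify each claim by direct case analysis on the relative positions of $y$ and $w$ in the tree, using throughout the explicit metric formula $\rho(y,w) = 2\seq{|y \wedge w|} - \seq{|y|} - \seq{|w|}$.

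For (i), I would split on whether both $y, w$ lie in $T(z_{1:l})$, so that $z_{1:l}$ is a common prefix and $|y \wedge w| \geq l$ forces $\rho(y,w) \leq 2\seq{l}$; or one of them equals $z_{1:l-1}$, in which case direct computation yields $\rho(y,w) = \seq{l-1} - \seq{|w|} \leq \seq{l-1} = 2\seq{l}$ (and analogously with $y$ and $w$ swapped).

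For (ii), set $k := |y \wedge w|$. The assumption $y \notin \Tu(z_{1:l})$ forces $k \leq l-1$, since a common prefix of length $\geq l$ would place $y$ in $T(z_{1:l})$. I would then split on $k = l-1$ versus $k \leq l-2$. When $k = l-1$, the exclusion $y \neq z_{1:l-1}$ together with $|y \wedge w| = l-1$ forces $|y| \geq l$ (so $\seq{|y|} \leq \seq{l}$), and the metric formula then gives $\rho(y,w) \geq 2\seq{l-1} - \seq{l} - \seq{|w|} = \seq{l} + (\seq{l-1} - \seq{|w|})$ as required. When $k \leq l-2$, using $\seq{k} \geq 4\seq{l}$ together with $\seq{|y|} \leq \seq{k}$ yields a lower bound that comfortably dominates the required $3\seq{l} - \seq{|w|} = \seq{l} + (\seq{l-1} - \seq{|w|})$. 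Part (iii) is then immediate from (ii) by taking $w = z \in \infs$ so that $\seq{|w|} = 0$; the displayed identity $\cB_{\seq{l-1}}(z) = \Tu(z_{1:l})$ follows at once from (i) for containment and (iii) for the converse.

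For (iv), fix any infinite extension $z \in \infs$ of $y$, so that $y = z_{1:l}$ and $\Tu(y) = \Tu(z_{1:l})$. For $w \in \Tu(y)$, direct computation gives $\rho(y,w) \leq \seq{l}$: the subcase $w = y_{1:l-1}$ yields $\rho = \seq{l-1} - \seq{l} = \seq{l}$, and the subcase $w \in T(y)$ yields $\rho = \seq{l} - \seq{|w|} \leq \seq{l}$. For the converse, apply (ii) with $y$ playing the role of the interior point and the candidate external point playing the role of the outside element; this forces the distance to be at least $\seq{l} + (\seq{l-1} - \seq{l}) = \seq{l-1} > \seq{l}$, so nothing outside $\Tu(y)$ can lie in $\cB_{\seq{l}}(y)$. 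The main routine obstacle across the argument is the bookkeeping in (ii), particularly noting that when $k = l-1$ the exclusion of $z_{1:l-1}$ sharpens $\seq{|y|}$ to be at most $\seq{l}$, which is exactly what produces the claimed bound; everything else reduces to the definition of $\rho$ and repeated use of $\seq{k} = 2^{-k}$.
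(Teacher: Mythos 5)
Your proof is correct and follows essentially the same approach as the paper: the same explicit metric formula, the same case analysis on $|y\wedge w|$ (your split on $k=l-1$ versus $k\le l-2$ in (ii) is exactly the paper's split on $y\in T(z_{1:l-1})$ versus $y\notin T(z_{1:l-1})$), and the same derivation of (iii), the closed-ball identity, and (iv) from (i) and (ii).
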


\begin{proof} \textbf{\textit{(i)}} Let $w\in T(z_{1:l})$ and note that $|w| \geq l$ and $z_{1:l}\prefix w$.
For the case $y = z_{1:l-1}$,
$\dist(y, w) = \seq{l-1} - \seq{|w|} \leq \seq{l-1}$.
For the other cases $y\in T(z_{1:l})$, note that $\seq{|\ca{w}{y}|} \geq  \seq{l}$ and thus 
\beq
\dist(y, w) = 2\seq{|\ca{w}{y}|} - \seq{|w|} - \seq{|y|} \leq 2\seq{|\ca{w}{y}|} \leq  2\seq{l} = \seq{l-1}.
\eeq
To show \textbf{\textit{(ii)}}
note that for $y \notin \Tu(z_{1:l})$ and $w\in \Tu(z_{1:l})$, we have $|\ca{w}{y}|\leq l-1$.
There are two possible cases, $y\in T(z_{1:l-1})$ and $y\notin T(z_{1:l-1})$.
For the first case, since $y\notin \Tu(z_{1:l})$ we have $y\in T(z_{1:l-1})\setminus \Tu(z_{1:l})$ and thus $|\ca{w}{y}| = l-1$ and $|y|\geq l$.
Hence,
\beq
\dist(y, w) = 2\seq{l-1} - \seq{|w|} - \seq{|y|}
\geq 2\seq{l-1} - \seq{|w|} - \seq{l} = \seq{l} + (\seq{l-1} - \seq{|w|}).
\eeq
For the second case, $y\notin T(z_{1:l-1})$, we have $|\ca{w}{y}|<l-1$ and by definition 	 $|y| \geq |\ca{w}{y}|$, thus 
\beq
\dist(y, w)
\geq 2\seq{|\ca{w}{y}|} - \seq{|w|} - \seq{|\ca{w}{y}|}
\geq \seq{l-2} - \seq{|w|} > 
\seq{l} + (\seq{l+1} - \seq{|w|}).
\eeq
Part \textbf{\textit{(iii)}} readily follows from \textit{(ii)}.
To show \textbf{\textit{(iv)}} note that since $|y|=l$, for any $w\in T(y)$ we have 
$\dist(y, w) \leq \seq{l}$. Similarly, $\dist(y_{1:l-1},y) = \seq{l}$.
Thus, $\Tu(y)\subseteq \cB_{\seq{l}}(y)$.
In addition, for any $w\notin \Tu(y)$, part \textit{(ii)} implies $\dist(w,y)  \geq \seq{l} + (\seq{l-1} -\seq{l}) > \seq{l}$, so $\cB_{\seq{l}}(y)\subseteq\Tu(y)$.
\end{proof}

\paragraph{Bad Voronoi cells.} The following two lemmas show that the tree structure of the construction, combined with the packing and covering properties of $\g$-nets, imply that Voronoi cells with $V \cap C \neq \emptyset$ have a special form. Call a Voronoi cell $V$ \emph{pure} iff $V \cap C = \emptyset$, and \emph{impure} otherwise. 

\begin{lemma}
\label{lem:anc}
Let $\seq{k} = 2^{-k}$ and $\gnet_k$ be a $\seq{k}$-net with induced Voronoi cells $\V_k$. Let $V\in \V_k$ be an impure cell, $\anc\in V$ be its anchor, and $z \in V \cap C$. 
Then,
\begin{itemize}
\item[(I)] $\anc \in \Tu(z_{1:k+1})= \cB_{\seq{k}}(z)$ and there is no other anchor besides $\anc$ in $\Tu(z_{1:k+1})$.

\item[(II)] $T(z_{1:k+1})\subseteq V \subseteq \Tu(z_{1:k}) = \cB_{\seq{k-1}}(z)$.

\item[(III)] If $V \setminus \Tu(z_{1:k+1}) \neq \emptyset$ then $\anc = z_{1:k}$ and $T(z_{1:k})\subseteq V$.

\item[(IV)] If $V \setminus T(z_{1:k+1}) \neq \emptyset$ then $z_{1:k}\in V$.
\end{itemize}
\end{lemma}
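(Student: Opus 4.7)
The plan is to prove the four claims in order, with (III) carrying the main technical weight and (IV) following as a corollary.

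For (I), the covering property of $\gnet_k$ applied to $z\in V$ gives $\rho(z,\anc)\le\seq{k}$, and Lemma~\ref{lem:gnet_balls}(iii) identifies $\cB_{\seq{k}}(z)=\Tu(z_{1:k+1})$, so $\anc\in\Tu(z_{1:k+1})$. Uniqueness inside this ball is forced by Lemma~\ref{lem:gnet_balls}(i) (the diameter of $\Tu(z_{1:k+1})$ is at most $\seq{k}$) together with the strict packing condition $\rho>\seq{k}$ for distinct net points.

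Part (II) splits into two inclusions. The direction $V\subseteq\Tu(z_{1:k})$ is a one-line triangle argument: for any $w\in V$, the covering property yields $\rho(w,\anc)\le\seq{k}$, and combined with $\rho(\anc,z)\le\seq{k}$ this gives $\rho(w,z)\le 2\seq{k}=\seq{k-1}$, hence $w\in\cB_{\seq{k-1}}(z)=\Tu(z_{1:k})$. For $T(z_{1:k+1})\subseteq V$, I would check that for each $w\in T(z_{1:k+1})$ the anchor $\anc$ strictly beats every other net point $\anc'$: by (I) all other net points lie outside $\Tu(z_{1:k+1})$, so Lemma~\ref{lem:gnet_balls}(ii) yields $\rho(\anc',w)\ge\seq{k+1}+\seq{k}-\seq{|w|}$, while a direct prefix calculation upper-bounds $\rho(\anc,w)$ by $\seq{k}-\seq{|w|}$, leaving a strict gap of at least $\seq{k+1}$.

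Part (III) is the delicate step; I would argue by contradiction. Assume $\anc\in T(z_{1:k+1})$ and pick $w\in V\setminus\Tu(z_{1:k+1})$, which by (II) lies in $\Tu(z_{1:k})$. A prefix computation of $\rho(\anc,w)$ in the two cases $w=z_{1:k-1}$ and $w\in T(z_{1:k})\setminus\Tu(z_{1:k+1})$ shows $\rho(\anc,w)>\seq{k}$ in every subcase except the knife-edge situation $|\anc|=|w|=k+1$ (so $\anc=z_{1:k+1}$ and $w$ is a sibling of it at level $k+1$). The strict subcases immediately contradict the covering bound $\rho(w,\anc)\le\seq{k}$. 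The main obstacle is the knife-edge case, where $\rho(\anc,w)=\seq{k}$ exactly and no bound on $w$ alone suffices. To close it I would invoke an auxiliary infinite descendant $x\in T(w)\cap\infs$ (which exists by construction), applying Lemma~\ref{lem:gnet_balls}(iii) to obtain $\cB_{\seq{k}}(x)=\Tu(w)=T(w)\cup\{z_{1:k}\}$; covering then forces some net point into this set, but $z_{1:k}$ is excluded by (I), and any net point inside $T(w)$ would sit strictly closer to $w$ than $\anc$ does, contradicting $w\in V$. Hence $\anc=z_{1:k}$.

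The inclusion $T(z_{1:k})\subseteq V$ in (III) then follows by combining (II) with a parallel localization argument on sibling subtrees $T(z_{1:k}.c)$, $c\neq z_{k+1}$: Lemma~\ref{lem:gnet_balls}(iii) confines net points within $\seq{k}$ of any $w\in T(z_{1:k}.c)$ to $\Tu(z_{1:k}.c)$, and any strict descendant of $z_{1:k}.c$ would violate packing against $\anc=z_{1:k}$. Part (IV) is then immediate: if $w\in V\setminus T(z_{1:k+1})$ is not $z_{1:k}$ itself, then $w\notin\Tu(z_{1:k+1})$, so (III) applies and $z_{1:k}=\anc\in V$.
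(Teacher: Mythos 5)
Your proof is correct and follows essentially the same structure as the paper's: part (I) is identical, part (II) differs only in that you use the triangle inequality for $V\subseteq\Tu(z_{1:k})$ where the paper applies Lemma~\ref{lem:gnet_balls}(ii) directly, and your part (III) performs the same case reduction to the single knife-edge $\anc=z_{1:k+1}$, $|w|=k+1$, resolved via an infinite descendant of $w$ (the paper invokes Lemma~\ref{lem:gnet_balls}(iv), you invoke (iii); the effect is identical). One small overstatement in your closing argument for $T(z_{1:k})\subseteq V$: Lemma~\ref{lem:gnet_balls}(iii) is stated only for $z\in C$, so it does not literally confine net points within $\seq{k}$ of an arbitrary finite $w\in T(z_{1:k}.c)$ to $\Tu(z_{1:k}.c)$ — indeed for $w=z_{1:k}.c$ a net point outside $\Tu(z_{1:k}.c)$ can sit at distance exactly $\seq{k}$. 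The conclusion survives because $\anc=z_{1:k}$ is strictly closer; the paper sidesteps this by noting directly that for $y\in T(z_{1:k})$ and any anchor $\ancb\neq\anc=z_{1:k}$ one has $\dist(y,\ancb)=\dist(y,\anc)+\dist(\anc,\ancb)>\dist(y,\anc)$, using $\anc\prefix y$ and $\ancb\notin T(z_{1:k})$.
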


\begin{proof}
We prove the above parts by their order.
\paragraph{Part \textit{(I)}.}
Since $V$ is a Voronoi cell of a $\seq{k}$-net, we must have $\dist(z,\anc) \leq \seq{k}$.
Hence, $\anc \in \cB_{\seq{k}}(z)$.
By Lemma \ref{lem:gnet_balls}, $\cB_{\seq{k}}(z)= \Tu(z_{1:k+1})$.
In addition, by the $\seq k$-net condition, any other anchor $\ancb\in\gnet_k$ must satisfy $\dist(\anc,\ancb) > \seq{k}$.
However, if $\ancb \in \Tu(z_{1:k+1})$, part \textit{(i)} of Lemma \ref{lem:gnet_balls} implies $\dist(\anc,\ancb) \leq \seq{k}$.
Hence there is no other anchor besides $\anc$ in $\Tu(z_{1:k+1})$.

\paragraph{Part \textit{(II)}.} 
To show $T(z_{1:k+1})\subseteq V$ note that by part (I) there is no other anchor in $\Tu(z_{1:k+1})$ besides $\anc$.
However, this implies that any other anchor $\ancb\neq \anc$ is too far to be the anchor of any $y\in T(z_{1:k+1})$, implying $y\in V$. 
To see this, note that since $\ancb \notin \Tu(z_{1:k+1})$, we have $|\ca{\ancb}{y}| \leq k$.
Similarly, $|\ca{\anc}{y}| \geq k$.
Consider first the case $|\ca{\anc}{y}| = k$.
Then it must be that $\anc = z_{1:k}$ and $\anc \prefix y$.
In addition, since $\ancb \notin \Tu(z_{1:k+1})$ we have $ \dist(\anc,\ancb) > 0$.
So by the definition of $\dist$,
\beq
\dist(y,\ancb) = \dist(y,\anc) + \dist(\anc,\ca{\anc}{\ancb}) + \dist(\ca{\anc}{\ancb},\ancb) =\dist(y,\anc) + \dist(\anc,\ancb) > \dist(y,\anc).
\eeq
Hence, $\ancb$ is too far from $y$ to be its anchor as claimed.
Next, consider the case $|\ca{\anc}{y}|>k$.
By definition, both $y,\anc \in T(\ca{\anc}{y})$, and
\beqn
\label{eq:part_II_1}
\dist(y,\anc) \leq \dist(y,\ca{\anc}{y}) + \dist(\ca{\anc}{y},\anc)
\leq \dist(y,\ca{\anc}{y}) + \seq{|\ca{\anc}{y}|}
\leq \dist(y,\ca{\anc}{y}) + \seq{k+1}.
\eeqn
On the other hand, since $\ancb \notin \Tu(z_{1:k+1})$ and $z_{1:k} \prefix \ca{\anc}{y} \prefix y$, 
\beq
\dist(y,\ancb) = \dist(y,\ca{\anc}{y}) + \dist(\ca{\anc}{y},\ancb).
\eeq
By part \textit{(ii)} of Lemma \ref{lem:gnet_balls} (with $w=\ca{\anc}{y}$ and $l= k+1$),
\beq
\dist(\ca{\anc}{y},\ancb) \geq \seq{k+1} + (\seq{k} -\seq{|\ca{\anc}{y}|}).
\eeq
Applying the fact that $|\ca{\anc}{y}|>k$, we thus get
\beqn
\label{eq:part_II_2}
\dist(y,\ancb) \geq \dist(y,\ca{\anc}{y}) + \seq{k+1} + (\seq{k} - \seq{k+1}) = \dist(y,\ca{\anc}{y}) + \seq{k}.
\eeqn
Hence, by (\ref{eq:part_II_1}) and (\ref{eq:part_II_2}),
$\dist(y,\ancb) - \dist(y,\anc) \geq \seq{k} - \seq{k+1} = \seq{k+1} > 0$.
Thus $\ancb$ is too far from $y$ to be its anchor in this case as well. 
It follows that $T(z_{1:k+1})\subseteq V$ as claimed.

To show $V \subseteq \Tu(z_{1:k})$, note that by part \textit{(I)}, $\anc \in  \Tu(z_{1:k+1})$ and thus $|\anc| \geq k$.
For any $y\notin \Tu(z_{1:k})$, by part \textit{(ii)} of Lemma \ref{lem:gnet_balls} (with $w=\anc$ and $l=k$),
\beq
\dist(y,\anc) \geq \seq{k} + (\seq{k-1} -  \seq{|\anc|}) \geq \seq{k-1} > \seq{k}.
\eeq
However, by the $\seq{k}$-net condition, for any $y\in V$ we must have $\dist(y,\anc)\leq\seq{k}$, thus $y\notin V$.

\paragraph{Part \textit{(III)}.} 
Let $y\in V \setminus \Tu(z_{1:k+1})\neq \emptyset$ and assume,
to get
a contradiction, that $\anc \neq z_{1:k}$.
By part (I) of this lemma, $\anc \in \Tu(z_{1:k+1}) = T(z_{1:k+1}) \cup \{z_{1:k}\}$, so it must be that $\anc \in T(z_{1:k+1})$ and $|\anc| \geq k+1$.
On the other hand, by part \textit{(II)}, $V \subseteq \Tu(z_{1:k})=T(z_{1:k}) \cup \{z_{1:k-1}\}$, so 
$y$ can be either $y=z_{1:k-1}$ or be in a different branch of $T(z_{1:k})$ which is not $T(z_{1:k+1})$, namely, $y$ is such that $z_{1:k} \prefix y$, $z_{1:k+1} \not\prefix y$ and $|y| \geq k+1$.
Consider first the case $y=z_{1:k-1}$. Then, $y\prefix \anc$ and
\beq
\dist(y,\anc) = \seq{|y|} -  \seq{|\anc|} = \seq{k-1} -  \seq{|\anc|} \geq \seq{k-1} -  \seq{k+1} > \seq{k}.
\eeq
However, this contradicts the assumption that $\anc$ is the anchor of $y$, since by the $\seq k$-net condition any $y\in V$ must satisfy $\dist(y,\anc)\leq\seq{k}$.
So $y \neq z_{1:k-1}$.

Next, we consider the other possible cases for $y$, that is, $z_{1:k} \prefix y$, $z_{1:k+1} \not\prefix y$ and $|y| \geq k+1$.
Since $\anc \in T(z_{1:k+1})$, we have $|\ca{\anc}{y}| = k$.
However, if either $|y| \geq k+2$ or $|{\anc}|\geq k+2$, then
\beq
\dist(y,\anc) = 2\seq{|\ca{\anc}{y}|} - \seq{|y|} - \seq{|{\anc}|}
\geq 2\seq{k} - \seq{k+1} - \seq{k+2} = \seq{k} + \seq{k+2} > \seq{k}.
\eeq
So it must be that $\anc = z_{1:k+1}$, $|y| = k+1$, and $y$ is the single point in $T(y)$ that belongs to $V$.
Let $w \in (T(y) \setminus \{y\}) \cap C$, so $w\notin V$ and there must be another anchor $\ancb\neq\anc$ such that $\\dist(w,\ancb)\leq\seq{k}$.
Since $w\in C$, part (I) of this Lemma implies $\ancb \in \Tu(w_{1:k+1}) = \Tu(y)$.
However, by part \textit{(iv)} of Lemma \ref{lem:gnet_balls}, $\Tu(y)= \cB_{\seq{k+1}}(y)$ and thus $\dist(y,\ancb)  \leq \seq{k+1}$.
On the other hand, $\dist(y,\anc) = 2\seq{k+1}$, 
which is in contradiction to the assumption that $\anc$ is the anchor of $y$.
This exhausts all cases and thus $\anc = z_{1:k}$.

We next show that $\anc = z_{1:k}$ implies $T(z_{1:k})\subseteq V$.
Indeed, 
for any $y\in T(z_{1:k})$,
$\dist(y,\anc) \leq \seq{k}$.
So by the $\seq k$-net condition there is no other anchor in $T(z_{1:k})$ besides $\anc$.
In addition, for any other anchor $\ancb\neq\anc$, we have $|\ca{\anc}{\ancb}| \leq k-1$ and thus $\dist(\anc,\ancb)>0$.
Hence, 
$\dist(y,\ancb) = \dist(y,\anc) + \dist(\anc,\ancb) >\dist(y,\anc)$,
implying $y\in V$ as claimed.

\paragraph{Part \textit{(IV)}.} 
Suppose $V \setminus T(z_{1:k+1}) \neq \emptyset$.
If $z_{1:k}\notin V$ then we must have $V \setminus \Tu(z_{1:k+1})\neq\emptyset$.
However, by part \textit{(III)}, $z_{1:k}= \anc \in V$, which is a contradiction.
\end{proof}

\begin{lemma}
\label{lem:preiss_gnet_aux} 
Let $\seq k= 2^{-k}$ and let $\gnet_k$ be a $\seq k$-net of $\X$ with induced Voronoi cells $\V_k$. Then, 
\begin{itemize}
	\item[(A)] Any impure Voronoi cell $V\in \V_k$ can be one of only four types:
\begin{itemize}
\item[-]	{Type $I_a$}: $V = T(z_{1:k})$ for some $z_{1:k}\in \fins$. 
\item[-]	{Type $I_b$}: $V = T(z_{1:k+1})$ for some $z_{1:k+1}\in \fins$. 
\item[-] {Type $II_a$}: $V = \Tu(z_{1:k})$ for some $z_{1:k}\in \fins$. 
\item[-] {Type $II_b$}: $V = \Tu(z_{1:k+1})$ for some $z_{1:k+1}\in \fins$. 
\end{itemize}
\item[(B)] For all sufficiently large $k\in\N$, cells of type $I_a,I_b$ have true-majority-vote 1, while those of type $II_a,II_b$ have true-majority-vote 0.
\end{itemize}
\end{lemma}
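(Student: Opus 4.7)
For part (A), I would take an impure cell $V \in \V_k$ with anchor $\anc$ and some $z \in V \cap C \subseteq \infs$, and feed this into Lemma \ref{lem:anc}. Part (II) already gives the sandwich $T(z_{1:k+1}) \subseteq V \subseteq \Tu(z_{1:k})$, so the only issue is classifying exactly which points strictly between these extremes belong to $V$. I would split into two cases based on Part (III)'s hypothesis. If $V \subseteq \Tu(z_{1:k+1}) = T(z_{1:k+1}) \cup \{z_{1:k}\}$, the only freedom is whether $z_{1:k}$ is in $V$, giving $V = T(z_{1:k+1})$ (type $I_b$) or $V = \Tu(z_{1:k+1})$ (type $II_b$). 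Otherwise, Part (III) forces $\anc = z_{1:k}$ and $T(z_{1:k}) \subseteq V$; combined with $V \subseteq \Tu(z_{1:k}) = T(z_{1:k}) \cup \{z_{1:k-1}\}$, the only freedom is whether $z_{1:k-1}$ is in $V$, giving types $I_a$ or $II_a$. This exhausts the cases.

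For part (B), I would compute the $\mu$-mass of label $1$ and label $0$ points in each cell and show that one dominates for all large $k$. Since label $1$ points are exactly $V \cap \infs$ and label $0$ points are $V \cap \fins$, the masses are $\alpha\nu_\infty(V \cap \infs)$ and $(1-\alpha)\nu_0(V \cap \fins)$. The key ingredients are the identity
\[
\nu_\infty(T(z_{1:j}) \cap \infs) = \frac{1}{N_1 \cdots N_j}, \qquad \nu_0(T(z_{1:j}) \cap \fins) = \frac{1}{N_1 \cdots N_j} \sum_{l \ge j} a_l N_1 \cdots N_l.
\]
For types $I_a, I_b$ (where $V = T(z_{1:j})$ with $j \in \{k, k+1\}$), the ratio of label-$1$ to label-$0$ mass is $\alpha / \bigl((1-\alpha) \sum_{l \ge j} a_l N_1 \cdots N_l\bigr)$, whose denominator tends to $0$ as $k \to \infty$ because $\sum_{l \ge 1} a_l N_1 \cdots N_l = 1$ converges (first condition of \eqref{nkak}); hence label $1$ wins for large $k$.

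For types $II_a, II_b$ (where $V = \Tu(z_{1:j})$ with $j \in \{k, k+1\}$), the extra finite point $z_{1:j-1}$ contributes an additional $(1-\alpha)a_{j-1}$ to the label-$0$ mass. Thus the ratio of label-$1$ to label-$0$ mass is bounded above by $\alpha / \bigl((1-\alpha) a_{j-1} N_1 \cdots N_j\bigr)$, and the denominator tends to $\infty$ by the second condition of \eqref{nkak} (which states $\lim_{k \to \infty} a_k N_1 \cdots N_{k+1} = \infty$, i.e.\ $a_{j-1} N_1 \cdots N_j \to \infty$). Hence label $0$ wins for all large $k$. Taking $k$ large enough that all four bounds are simultaneously tight yields part (B).

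The main obstacle is conceptual rather than technical: one must recognize that it is precisely the interplay between the two conditions in \eqref{nkak} --- the convergence of $\sum a_k N_1 \cdots N_k$ and the divergence of $a_k N_1 \cdots N_{k+1}$ --- that causes a single ancestor point $z_{1:j-1}$ to flip the majority from label $1$ to label $0$, despite the subtree below carrying the bulk of the infinite-sequence mass. This asymmetric sensitivity to the inclusion of a single ancestor is the structural feature that drives both the failure of the Besicovitch condition and the success of the packing property of $\g$-nets in excluding type $II$ cells in aggregate (as exploited in the proof of \thmref{ksupreissemp}).
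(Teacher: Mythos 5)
Your proposal is correct and follows essentially the same route as the paper. For part (A), you invoke Lemma~\ref{lem:anc}(II)--(III) to sandwich $V$ and then case-split on $V \subseteq \Tu(z_{1:k+1})$ versus $V\setminus\Tu(z_{1:k+1})\neq\emptyset$, reducing in each case to a binary choice about a single ancestor point; the paper does the same via parts (II)--(IV) (and, incidentally, your labeling of the four types matches the lemma statement, whereas the paper's proof text swaps the $a$/$b$ subscripts in a couple of places). For part (B), you compute $\mu(V\cap\infs)$ versus $\mu(V\cap\fins)$ and show the ratio diverges for $T(z_{1:j})$ using convergence of $\sum a_l N_1\cdots N_l$, and vanishes for $\Tu(z_{1:j})$ by bounding the label-$0$ mass below by the single atom $z_{1:j-1}$ and using $a_{j-1}N_1\cdots N_j\to\infty$; this is the paper's computation, expressed as a mass ratio rather than as $\mu(V\cap C)/\mu(V)$. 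Your closing observation that the two conditions in \eqref{nkak} jointly drive the sign flip is exactly the right structural reading.
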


\begin{proof}[Proof of \lemref{preiss_gnet_aux}]
By part \textit{(II)} of Lemma \ref{lem:anc}, $T(z_{1:k+1}) \subseteq V$.
When equality holds, $V=T(z_{1:k+1})$, which is of type $I_a$.
By part (IV),
when $V\setminus T(z_{1:k+1})\neq\emptyset$, $V$ must contain $z_{1:k}$ and thus $\Tu(z_{1:k+1})\subseteq V$. When equality holds, $V=\Tu(z_{1:k+1})$ and we get type $II_a$.
When $V \setminus \Tu(z_{1:k+1}) \neq \emptyset$, part (III)
implies $T(z_{1:k})\subseteq V$.
When equality holds, $V = T(z_{1:k})$, which is of type $I_b$. 
By part \textit{(II)},
we have $V\subseteq\Tu(z_{1:k})$, so we are left with $V=\Tu(z_{1:k})$ which is
of type $II_b$.
Hence \text{\textit{(A)}} follows.

To show \text{\textit{(B)}},
consider first cells of type $I_a$. For these we have
\beq
\mu(T(z_{1:k}) \cap C) =
\Pr[ y \in \infs :  z_{1:k}\prefix y] = 
\alpha/(N_1\ldots N_k).
\eeq
Similarly,
\beqn
\label{eq:mu_T_z}
\mu(T(z_{1:k})) & = & \alpha/(N_1\ldots N_k) + \Pr[ y \in \fins : z_{1:k}\prefix y]
\\
&=& \alpha/(N_1\ldots N_k) + 
(1-\alpha) ( a_k + \sum_{j=1}^\infty a_{k+j} N_{k+1} \ldots N_{k+j}).
\eeqn
Thus,
\beq
\frac{\mu(T(z_{1:k}) \cap C)}{\mu(T(z_{1:k}))} &=& \frac{1}{1 + \frac{1-\alpha}{\alpha} N_1 \ldots N_k(a_k + \sum_{j=1}^\infty a_{k+j} N_{k+1} \ldots N_{k+j}) }
\\ &=& \frac{1}{1 + \frac{1-\alpha}{\alpha} \sum_{j=0}^\infty a_{k+j} N_{1} \ldots N_{k+j}}.
\eeq
By the first condition in \eqref{nkak}, $\sum_{k=1}^\infty a_k N_1 \ldots N_k = 1$.
Thus, we have $\sum_{j=0}^\infty a_{k+j} N_{1} \ldots N_{k+j} = \sum_{j=k}^\infty a_{j} N_{1} \ldots N_{j} \to 0$ when $k\to\infty$.
It follows that
\beqn
\label{eq:VCV_TCT}
\frac{\mu(T(z_{1:k}) \cap C)}{\mu(T(z_{1:k}))} \xrightarrow{k\to\infty} 1.
\eeqn
Hence, for sufficiently large $k$, cells of type $I_a$ have true-majority-vote $1$.
An identical
argument shows that
cells of
type $I_b$ have true-majority-vote $1$ as well.

For cells of type $II_a$,
\beq
\mu(\Tu(z_{1:k}) \cap C) = \mu(T(z_{1:k}) \cap C) =
\Pr[ y \in \infs : z_{1:k}\prefix y] = 
\alpha/(N_1\ldots N_k),
\eeq
the same as for type $I_a$. However, since $z_{1:k} \in \Tu(z_{1:k+1})$, we have $\mu(\Tu(z_{1:k+1})) \geq (1-\alpha)a_{k}$.
Thus, by the second condition in \eqref{nkak},
\beq
\frac{\mu(\Tu(z_{1:k}) \cap C)}{\mu(\Tu(z_{1:k}))} &\leq&
\frac{\alpha}{(1-\alpha)a_k N_1 \ldots N_{k+1}} \xrightarrow{k\to\infty} 0.
\eeq
Thus, for sufficiently large $k$, cells of type $II_a$ have true-majority-vote $0$.
Same argument shows that cells of type $II_b$ have true-majority-vote $0$ as well.
\end{proof}

\subsection{Consistency with true majority vote labels}
\label{sec:preiss_true}
Consider a $\g$-net $\gnet \equiv \gnet(\g)$ of $\X$. Recall that $\SP(\gnet) = \{\sp_1,\ldots\}$ is the Voronoi partition induced by $\gnet$ and $I_{\SP}: \X \to {\SP}$ such that $I_\SP(x)$ is the unique $\sp\in\SP$ for which $x\in\sp$.
Define the true majority-vote classifier $h_{\gnet}:\X\to\Y$ given by 
\beqn
\label{eq:Strue3}
h_{\gnet}(x) = y^*(I_\SP(x)),
\eeqn
where $y^*(A)$ is the true majority vote label of $A\subseteq \X$ as given in \eqref{bar y(E)}.

\begin{theorem} 
\label{thm:gnet_cons}
Assume $(\X,\rho)$, $\Y$ and $\bmu$ as in \exref{preiss}. For $k \in \N$, let $\gnet_k$ be some $\seq k$-net over $\X$. The sequence $(h_{\gnet_{k}})_{k\in\N}$ of true majority-vote classifiers is Bayes consistent on $\bmu$:
\beq
\lim_{k \rightarrow \infty}\err(h_{\gnet_{k}}) = 0.
\eeq
\end{theorem}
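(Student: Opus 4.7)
My plan is to combine the classification of impure Voronoi cells in \lemref{preiss_gnet_aux} with the trivial observation that pure cells contribute no error, and then handle each of the two resulting terms separately. For all sufficiently large $k$, part (B) of that lemma assigns true-majority label $1$ to every impure cell of the form $T(z_{1:j})$ and label $0$ to every impure cell of the form $\Tu(z_{1:j})$, for $j\in\{k,k+1\}$. A pure cell lies entirely in $Z_0$, so its true majority is $0$, matching every one of its points. Writing $U_k^I$ and $U_k^{II}$ for the unions of all impure $T$-type and $\Tu$-type cells in $\V_k$, the error then decomposes cleanly as
\[
\err(h_{\gnet_k}) \;=\; \mu\bigl(Z_0 \cap U_k^I\bigr) \;+\; \mu\bigl(C \cap U_k^{II}\bigr).
\]

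For the $Z_0$-piece I would use a simple containment: every $T(z_{1:j})$ with $j\in\{k,k+1\}$ consists only of sequences of length at least $k$, so $U_k^I \subseteq \{x\in\X:|x|\ge k\}$, which gives
\[
\mu\bigl(Z_0 \cap U_k^I\bigr) \;\le\; (1-\alpha)\sum_{l\ge k} a_l\, N_1\cdots N_l.
\]
This is a tail of the convergent series from the first identity in \eqref{nkak}, and therefore vanishes as $k\to\infty$.

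The $C$-piece is where the main obstacle lies, and my plan is a counting argument exploiting the rigid shape of the $\Tu$-type cells. Since $\Tu(z_{1:k}) = T(z_{1:k}) \cup \{z_{1:k-1}\}$ and $T(z_{1:k})$ contains only sequences of depth $\geq k$, each $\Tu(z_{1:k})$-cell contains a unique point of depth $k-1$, namely $z_{1:k-1}$; the pairwise disjointness of Voronoi cells then caps the number of such cells by the number $N_1\cdots N_{k-1}$ of depth-$(k-1)$ finite sequences. An analogous argument caps the number of $\Tu(z_{1:k+1})$-cells by $N_1\cdots N_k$. Combined with $\mu(\Tu(z_{1:j})\cap C) = \mu(T(z_{1:j})\cap C) = \alpha/(N_1\cdots N_j)$, I obtain
\[
\mu\bigl(C \cap U_k^{II}\bigr) \;\le\; N_1\cdots N_{k-1}\cdot\frac{\alpha}{N_1\cdots N_k} \;+\; N_1\cdots N_k\cdot\frac{\alpha}{N_1\cdots N_{k+1}} \;=\; \frac{\alpha}{N_k} + \frac{\alpha}{N_{k+1}},
\]
which tends to $0$ because $N_k\to\infty$ (third identity in \eqref{nkak}). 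Summing the two bounds then yields $\lim_{k\to\infty}\err(h_{\gnet_k}) = 0$. The delicate point here is that the counting step relies crucially on the exhaustive four-type classification in \lemref{preiss_gnet_aux}(A), which in turn hinges on the packing property of a $\seq k$-net; relaxing this property is precisely what breaks consistency in \thmref{ksubayespreiss_incons}.
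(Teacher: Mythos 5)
Your proposal is correct and takes essentially the same approach as the paper's proof: both split the error into the contribution of $T$-type impure cells (true majority $1$, error on $Z_0$), bounded by the vanishing tail $\sum_{l\ge k}a_l N_1\cdots N_l$ of the convergent series in \eqref{nkak}, and the contribution of $\Tu$-type cells (true majority $0$, error on $C$), bounded via the same counting trick (one depth-$(k{-}1)$, resp.~depth-$k$, finite point per cell) by $\alpha/N_k+\alpha/N_{k+1}$. The only cosmetic difference is that your containment $U_k^I\subseteq\{x:|x|\ge k\}$ packages the $Z_0$-bound a bit more directly than the paper's cell-by-cell summation, but the resulting estimate is identical.
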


\begin{proof}[Proof of Theorem \ref{thm:gnet_cons}]
To bound the error of $h_{\gnet_k}$, first note that $h_{\gnet_k}$ may incur an error only from impure Voronoi cells.
Indeed, any pure $V$ contains only points from $\X\setminus C$, which all have label $0$ which is in agreement with the true majority-vote of $V$ as required.

As for impure cells, \lemref{preiss_gnet_aux} implies that these can be only of types $I_a,I_b,II_a,II_b$, 
and for all sufficiently large $k\in\N$,
all cells of type $I_a,I_b$ have true majority-vote 1, while all those of type $II_a,II_b$ have true majority-vote 0.

Due to their true majority-vote of $1$, each cell $V$ of type $I_a$ incurs an error
\beq
\mu( V \setminus C) = \mu(T(z_{1:k})\setminus C) = \Pr[y\in \fins: z_{1:k}\prefix y] = a_k + \sum_{j=1}^\infty a_{k+j} N_{k+1}\ldots N_{k+j}.
\eeq
There are at most $N_1 \ldots N_k$ cells of type $I_a$. Denoting by $\VIa_k \subseteq \V_k$ the set of all cells of type $I_a$, we have
\beq
\mu(\VIa_k \setminus C) \leq N_1 \ldots N_k(a_k + \sum_{j=1}^\infty a_{k+j} N_{k+1}\ldots N_{k+j}) = 
\sum_{j=k}^\infty a_{j} N_{1}\ldots N_{j} \xrightarrow{k\to\infty} 0.
\eeq
Similarly,
\beq
\mu(\VIb_k\setminus C) 
\leq 
\sum_{j=k+1}^\infty a_{j} N_{1}\ldots N_{j} \xrightarrow{k\to\infty} 0.
\eeq

As for cells of type $II_a$, due to their true majority-vote of $0$, each such cell incurs an error
\beq
\mu( V \cap C) = \mu(\Tu(z_{1:k}) \cap C) = \mu(T(z_{1:k})\cap C) =
\Pr[y\in \infs: z_{1:k}\prefix y] = \alpha/ (N_{1}\ldots N_{k}).
\eeq
Since for cells of type $II_a$ we have $z_{1:k-1}\in V$, there are at most $N_1\ldots N_{k-1}$ such cells in total, thus
\beq
\mu(\VIIa_k \cap C) \leq \frac{\alpha N_1 \ldots N_{k-1}}{N_1 \ldots N_{k}}= 
\frac{\alpha}{N_k}\xrightarrow{k\to\infty} 0,
\eeq
where the limit above follows from the last property of the construction in \eqref{nkak}.
Similarly, the total error incurred by cells of type $II_b$ is
\beq
\mu(\VIIb_k \cap C) \leq \frac{\alpha N_1 \ldots N_{k}}{N_1 \ldots N_{k+1}}= 
\frac{\alpha}{N_{k+1}}\xrightarrow{k\to\infty} 0.
\eeq
Hence, we conclude that
\beq
\err(h_{\gnet_k}) = \mu(\VIa_k\setminus C) + \mu(\VIb_k\setminus C) + \mu(\VIIa_k\cap C) + \mu(\VIIb_k\cap C)\xrightarrow{k\to\infty} 0.
\eeq
\end{proof}

\subsection{Consistency with empirical majority vote labels}
\label{sec:preiss_empirical}
The following theorem proves the consistency of \algname\ on the Preiss construction. \thmref{ksubayespreiss} immediately follows from this result. 
\begin{theorem}
Assume $(\X,\rho)$, $\Y$, and $\bmu$ as in \exref{preiss}. \algname\ is weakly-Bayes-consistent on $\bmu$. 
\end{theorem}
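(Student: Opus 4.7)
The strategy is to upper bound $\err(h_{\tS_n})$ using \algname's compression bound $Q_n$ evaluated at a reference scale that is well-behaved on Preiss's construction. For any $\eps > 0$, the plan is to exhibit $k = k(\eps)$ such that $\E[Q_n(\a(\seq k), \k(\seq k))]$ falls below $\eps$ for all sufficiently large $n$; the algorithm's optimality $Q_n(\a^*_n, \k^*_n) \leq Q_n(\a(\seq k), \k(\seq k))$ together with Property~1 of $Q$ then transfer the bound to $\err(h_{\tS_n})$, yielding weak Bayes-consistency as $\eps$ was arbitrary.

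First, choose $k$ large enough that every partition of $\X$ into cells of the four types characterized in \lemref{preiss_gnet_aux} admits a true-majority-vote classifier of error below $\eps/4$; the decay in $k$ is controlled by the counting estimates in the proof of \thmref{gnet_cons}. Consider \algname\ at a scale $\g \in \Gamma$ close to $\seq k$ (such a $\g$ exists for $n$ large enough by the density of pairwise sample distances). The resulting net $\bm X(\seq k) \subseteq S_n$ has two useful properties: (i) being $\seq k$-separated, its size $\k(\seq k)$ is bounded by a constant $C_k$ depending only on $k$, via the packing bound implicit in \lemref{gnet_balls}(i); and (ii) by \thmref{missing_mass} applied with the finite covering number $\mathcal{N}_{\seq k/2} \leq C_k$, the $\seq k$-missing mass $\missmass_{\seq k}(S_n)$ vanishes in expectation.

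Next, bound the empirical error $\a(\seq k)$. Since the empirical majority per cell is by definition the empirical-error minimizer, $\a(\seq k) \leq \serr(h_{\bm X(\seq k)}, S_n)$, where $h_{\bm X(\seq k)}$ is the \emph{true}-majority-vote classifier on the induced partition (in the sense of \eqref{Strue3}). A union bound of Hoeffding's inequality over the at most $O(n^{C_k})$ possible $\seq k$-nets of $S_n$ yields uniform concentration $\serr(h_{\bm X}, S_n) \leq \err(h_{\bm X}) + O(\sqrt{C_k \log(n)/n})$ with high probability. An adaptation of \thmref{gnet_cons} to nets over $S_n$ (handling the uncovered region via (ii)) then gives $\err(h_{\bm X(\seq k)}) \leq \eps/2$, so $\E[\a(\seq k)] \leq \eps/2 + o(1)$. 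By Property~1 of $Q$, $\err(h_{\tS_n}) \leq Q_n(\a^*_n, \k^*_n) \leq Q_n(\a(\seq k), \k(\seq k))$ with probability at least $1-\delta_n$, and by Property~3 applied with the $n$-independent bound $C_k$, this is at most $\a(\seq k) + o(1)$. Taking expectations and letting $n \to \infty$ yields $\limsup_{n \to \infty} \E[\err(h_{\tS_n})] \leq \eps/2 < \eps$, completing the proof.

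The main obstacle I foresee is adapting \lemref{preiss_gnet_aux} from $\seq k$-nets of $\X$ to the algorithm's $\seq k$-net of $S_n$, which typically fails to cover all of $\X$. The key observation is that in the proof of \lemref{anc}, the hypothesis $\dist(z,\anc) \leq \seq k$ is invoked only for a single witness $z \in V$ used to locate the anchor. Swapping the arbitrary $z \in V \cap C$ for a sample $X_i \in V$ having $Y_i = 1$ still gives $\dist(X_i,\anc) \leq \seq k$ from the covering of $S_n$, so the classification of any sample-bearing cell into the four types $I_a, I_b, II_a, II_b$ survives. Cells that contain no sample from $\infs$, together with the uncovered portion of $\X$, contribute at most $\missmass_{\seq k}(S_n)$ to the overall error, which is $o(1)$ by (ii) above.
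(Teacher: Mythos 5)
Your high-level plan — show that $Q_n$ evaluated at a reference scale $\seq k$ becomes small, then transfer via the algorithm's optimality and Property~1 — matches the paper's strategy at the outermost level. But the crucial difficulty, correctly identified in your last paragraph, is that the algorithm's $\seq k$-net $\bm X(\seq k)$ covers only the sample $S_n$, not all of $\X$, so \lemref{anc} and \lemref{preiss_gnet_aux} (both proved for $\seq k$-nets of $\X$) do not apply directly. Your patch for this does not work, and the paper handles it in a different, essentially structural way.

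Concretely, there are three problems with your patch. First, the claim that ``the hypothesis $\dist(z,\anc)\leq\seq k$ is invoked only for a single witness'' is incorrect: the proof of part (II) of \lemref{anc} shows $V\subseteq\Tu(z_{1:k})$ by arguing that for \emph{every} $y\in V$ the $\seq k$-net condition forces $\dist(y,\anc)\leq\seq k$ and hence $y\in\Tu(z_{1:k})$. When the net covers only $S_n$, a non-sample $y\in V$ can be far from $\anc$, so the conclusion $V\subseteq\Tu(z_{1:k})$ fails, and the cell need not be one of the four types even if it does contain a sample from $C$. Second, the claim that cells with no sample from $\infs$ contribute at most $\missmass_{\seq k}(S_n)$ is also false: a cell $V$ with $V\cap C\neq\emptyset$ but containing no sample from $C$ can still have all of $V\cap C$ covered by $\g_k$-balls around sample points in $\fins$, so $\mu(V\cap C)$ need not be charged to the missing mass at all. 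Third, \thmref{missing_mass} is stated for doubling spaces, and its bound is in terms of $\mathcal{N}_{\g/2}=\ceil{\diam(\X)/\g}^{\ddim}$, which is vacuous here since $\ddim=\infty$; using it requires re-deriving the missing-mass bound for a space whose covering numbers are merely finite at each scale, which is not done in the paper.

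The paper avoids all of these difficulties with \lemref{empnet}, which is the genuinely new idea in this proof and the one you are missing. It exploits the specific structure of Preiss's construction: the finite-depth prefixes $A_k=\{z\in\fins:|z|\leq k+1\}$ form a finite set of atoms of positive measure, and each subtree $P_z$ below such a prefix has positive measure, so with high probability the sample $X_n$ contains all of $A_k$ and hits each $P_z$. \lemref{empnet} then shows that under this event, \emph{any} $\seq k$-net of $X_n$ is automatically a $\seq k$-net of all of $\X$. This makes the missing mass exactly zero at the reference scale, so \lemref{preiss_gnet_aux} applies verbatim to the sample net, and the remaining work is a routine concentration argument (empirical majority votes agree with true majority votes, empirical error is close to true error) on the finitely many impure cells. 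If you wanted to pursue the missing-mass route instead, you would have to re-prove \lemref{anc} under a relaxed covering hypothesis $\dist(y,\anc)\leq 2\seq k$ (since sample covering at scale $\g_k$ only guarantees net covering at $2\g_k$), which would change the cell taxonomy and the subsequent counting — a substantial extra effort compared to the paper's clean structural shortcut.
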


The following lemmas allow us to reason about empirical $\seq{k}$-nets.

\begin{lemma}  \label{lem:empnet}
For $z \in Z_0$, denote $P_z := \{ y \in  Z_{\infty} : z \prec y\}$. Denote $A_k = \{z \in Z_0 : |z| \leq k+1\}$ and let $B\subseteq \X$ such that
\beqn
\label{eq:cond}
A_k \subseteq B \quad\text{and}\quad\forall z \in A_k, B \cap P_z \neq \emptyset.
\eeqn
Then any $\seq k$-net of $B$ is also a $\g_k$-net of $\X$.
\label{lem:T_in_V_emp}
\end{lemma}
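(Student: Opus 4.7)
The packing property carries over for free: since $\gnet \subseteq B \subseteq \X$, the condition $\rho(x,x')>\seq k$ for all distinct $x,x' \in \gnet$ is exactly what was already guaranteed by $\gnet$ being a $\seq k$-net of $B$. All the work goes into establishing the covering property, namely that every $y \in \X$ has a net point within distance $\seq k$. The plan is to split on $|y|$ and, in the hard case, import a ``witness'' from $\infs$ via the hypothesis, then pin down the location of the nearby net point using \lemref{gnet_balls}.

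\textbf{Easy case: $|y|\le k+1$.} Here $y\in A_k \subseteq B$, so the covering property of $\gnet$ as a $\seq k$-net of $B$ directly supplies an $x\in\gnet$ with $\rho(y,x)\le\seq k$.

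\textbf{Hard case: $|y|>k+1$.} The difficulty is that $y$ itself need not lie in $B$, and the natural candidate $y_{1:k+1} \in A_k$ is at distance $\seq{k+1}-\seq{|y|}$ from $y$; a naive triangle-inequality bound only yields $\rho(y,x)\le\seq k+\seq{k+1}$, which is too weak. The idea is to use the condition $B\cap P_z\ne\emptyset$ to pull a point deep inside the subtree rooted at $y_{1:k+1}$. Concretely, set $z:=y_{1:k+1}\in A_k$; by hypothesis there exists $w\in B\cap P_z$, i.e.\ $w\in\infs$ with $z\prec w$. Since $w\in B$, there is $x\in\gnet$ with $\rho(w,x)\le\seq k$. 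Now apply \lemref{gnet_balls}(iii) to $w\in\infs=C$ with $l=k+1$: every point outside $\Tu(w_{1:k+1})=\Tu(z)$ lies at distance at least $\seq k+\seq{k+1}>\seq k$ from $w$. Hence $x\in\Tu(z)=\{y_{1:k}\}\cup T(z)$.

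\textbf{Closing the distance bound.} It now remains to verify $\rho(y,x)\le\seq k$ in the two resulting subcases. If $z\preceq x$, then $|\ca{y}{x}|\ge k+1$ and the metric formula gives
\[
\rho(y,x)=2\seq{|\ca{y}{x}|}-\seq{|y|}-\seq{|x|}\le 2\seq{k+1}=\seq k.
\]
If instead $x=y_{1:k}$, then $\rho(y,x)=\seq k-\seq{|y|}\le\seq k$. Either way, $x$ witnesses covering at $y$. Combining with the easy case completes the proof.

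\textbf{Main obstacle.} The only nontrivial step is recognizing that the hypothesis ``$B\cap P_z\ne\emptyset$ for every $z\in A_k$'' is precisely what allows one to trade $y$ for an infinite witness $w$: this is what turns \lemref{gnet_balls}(iii) into a strong localization statement forcing the near-neighbor of $w$ to sit inside $\Tu(y_{1:k+1})$, and thereby also close to $y$. Without that structural localization, the triangle inequality alone would not suffice.
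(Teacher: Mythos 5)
Your proof is correct and follows essentially the same route as the paper's: use the hypothesis to extract an infinite witness $w\in B\cap P_{y_{1:k+1}}$, invoke \lemref{gnet_balls}(iii) to force the net point $x$ near $w$ into $\Tu(y_{1:k+1})$, and then read off $\rho(y,x)\le\seq k$ from the tree metric. The paper's version is terser (it cites the proof of part (II) of \lemref{anc} to place $T(z_{1:k+1})$ inside the Voronoi cell of the witness, leaving the final distance bound implicit), whereas you make the diameter/distance computation explicit; substantively the arguments coincide.
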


\begin{proof}
For any $z \notin B$, we have $|z| > k+1$ and there is some $y \in P_{z_{1:k}} \cap B$.
Let $\gnet_k$ be a $\seq k$-net of $B$.
Following the same steps as in the proof of part (II) in \lemref{anc}, the Voronoi cell $V$ that includes $y$ has $T(z_{1:k+1}) \subseteq V$.
Hence $z\in V$. It follows that $\X$ is covered by $\gnet_k$ and thus $\gnet_k$ is a $\seq k$-net of $\X$.
\end{proof}

\begin{lemma}
\label{lem:gnet_finite_size} For any $\g_k>0$, any $\g_k$-net of $\X$ in \exref{preiss} is of finite size.
\end{lemma}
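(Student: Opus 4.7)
The plan is to exhibit a finite subset of $\X$ such that every point of $\X$ (and in particular every element of any $\g_k$-net) lies within distance $\g_{k+1}$ of it, and then invoke the packing property of $\g$-nets to conclude that each point of this finite subset can ``cover'' at most one net point.

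Specifically, I would let $A_k := \{z \in \fins : |z| \leq k+1\}$, which is finite of size $\sum_{i=0}^{k+1} N_1 \cdots N_i$ since each $N_i$ is finite by construction. The first step is to check that every $y \in \X$ lies within distance $\g_{k+1}$ of some element of $A_k$: if $y \in \fins$ with $|y| \leq k+1$ then $y \in A_k$; otherwise the prefix $y_{1:k+1}$ belongs to $A_k$, and by the definition of $\rho$ one computes $\rho(y, y_{1:k+1}) = \g_{k+1} - \g_{|y|} \leq \g_{k+1}$ (including the case $|y|=\infty$ where $\g_{|y|}=0$).

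The second step is to combine this covering of $\X$ by $A_k$ with the packing property of a $\g_k$-net. Suppose $\gnet$ is any $\g_k$-net of $\X$, so that $\rho(x,x') > \g_k$ for all distinct $x,x' \in \gnet$. Since $\rho$ is a genuine metric (as noted after \exref{preiss}), the triangle inequality applies: if two distinct points $x,x' \in \gnet$ were both within $\g_{k+1}$ of the same $z \in A_k$, then $\rho(x,x') \leq 2\g_{k+1} = \g_k$, contradicting the packing condition. Hence the map sending each $x \in \gnet$ to some $z \in A_k$ with $\rho(x,z) \leq \g_{k+1}$ (which exists by the first step) is injective, yielding $|\gnet| \leq |A_k| < \infty$.

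I do not expect any serious obstacle here: the argument is a clean packing-versus-covering count, and the only subtlety is making sure the covering radius $\g_{k+1}$ is strictly less than half the packing radius $\g_k$, which is automatic from $\g_k = 2^{-k}$. The main thing to verify carefully is the metric computation $\rho(y, y_{1:k+1}) \leq \g_{k+1}$, which follows immediately from the formula $\rho(x,y) = (\g_{|\ca{x}{y}|} - \g_{|x|}) + (\g_{|\ca{x}{y}|} - \g_{|y|})$ applied with $x = y_{1:k+1}$, so that $\ca{x}{y} = y_{1:k+1}$ and the first summand vanishes.
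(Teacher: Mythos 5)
Your proof is correct, and it takes a genuinely different route from the paper's. The paper bounds the number of Voronoi cells of the $\g_k$-net via the structural \lemref{anc}(II): every impure Voronoi cell must contain an entire subtree $T(z_{1:k+1})$, of which there are only $N_1\cdots N_{k+1}$, and the finitely many points of $A_{k-1}$ absorb any remaining cells, so the partition (and hence the net) is finite. Your argument sidesteps the Voronoi-cell structure altogether: $A_k$ is a finite $\g_{k+1}$-cover of $\X$ (via the computation $\rho(y,y_{1:k+1})=\g_{k+1}-\g_{|y|}\le\g_{k+1}$), and the strict $\g_k$-separation of net points, combined with $2\g_{k+1}=\g_k$ and the triangle inequality, makes the closest-cover-point map $\gnet\to A_k$ injective, giving $|\gnet|\le|A_k|<\infty$. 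This is the standard totally-bounded-implies-finite-net argument; it is self-contained and relies only on $\rho$ being a metric, whereas the paper's proof leans on the Voronoi analysis of \lemref{anc}, which was developed for other purposes and is therefore convenient in context. Both routes yield an explicit finite bound of the same order, roughly $\sum_{i\le k+1}N_1\cdots N_i$.
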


\begin{proof} 
Let $A_k,P_z$ be defined as in \lemref{empnet}.
First note that $\X = A_{k-1} \cup \{\bigcup_{z: |z|=k+1} T(z)\}$ and $|A_{k-1}|<\infty$.
For any $z\in\fins$ with $|z|=k+1$, there exists in the $\g_k$-net a Voronoi cell $V \supseteq T(z)$. This follows by noting that for any $y\in P_z$, the Voronoi cell $I_{\V}(y)$ has $I_{\V}(y) \cap C\neq\emptyset$, thus by part (II) of \lemref{anc}, $T(z) \subseteq I_{\V}(y)$.
Since there are $N_1\ldots N_{k+1}<\infty$ number of subtrees $T(z)$ with $|z| = k+1$, and also $|A_{k-1}|<\infty$, there is a finite number of Voronoi cells in the partition of $\X$.
\end{proof}

\begin{proof}[Proof of \thmref{ksupreissemp}]
Let $X_n$ be the set of points in the sample $S_n$. Recall that for any $\g > 0$, $S_n(\g)$ is a $\g$-net of $X_n$ labeled by the empirical majority vote in $S_n$. Let $X_n(\g_k)$ be the set of points in $S_n(\g_k)$. 
Let $h_{S'_n}$ be the hypothesis returned by \algname\ when run with $\delta_n$ that satisfies Property 3 for $Q$, and recall that $S'_n = S_n(\g^*_n)$, where $\g^*_n$ is the scale selected by \algname\ for $S_n$.  

We show that for any $\epsilon,\delta \in (0,1)$ there exists an $n_0$ such that for any $n > n_0$, 
\[
\P[\err(h_{S'_n}) \leq \epsilon] \geq 1-\delta.
\]
This implies $\lim_{n\rightarrow \infty}\P[\err(h_{S'_n}) \leq \epsilon] = 1$, as required for weak consistency.

For each $k$, let $\gnet_k$ be some $\seq k$-net over $\X$. From \thmref{gnet_cons}, we have $\lim_{k \rightarrow \infty} \err(h_{\gnet_k}) = 0$. Let $k$ such that for all $\g_k$-nets $\gnet$ of $\X$, $\err(h_{\gnet}) \leq \epsilon$. Such a $k$ exists, since there is a finite number of possible impure cells in a $\g_k$-net over $\X$, and the partitioning of pure cells does not affect the error of $h_{\gnet}$.

\newcommand{\vk}{\mathbb{V}_k}

Let $A_k,P_z$ be defined as in \lemref{empnet}. 
From \exref{preiss} it can be verified that $|A_k| < \infty$, $\mu(A_k)>0$ and $\min_{z \in A_k}\mu(P_z) > 0$. Thus there exists an integer $n'$ such that for any $n > n'$, with a probability at least $1-\delta$, The condition \eqref{cond} in \lemref{empnet} holds for $B = X_n$. Thus in this case, $X_n(\g_k)$ is a $g_k$-net of $\X$. Therefore all the impure Voronoi cells $V$ induced by $X_n(\seq k)$, are of one of the types listed in \lemref{preiss_gnet_aux}. Denote this set of Voronoi cells by $\vk$.

Let $\beta = \min_{V\in \vk}(|\mu(V \cap C) - \mu(V \setminus C)|)$. From part (B) of \lemref{preiss_gnet_aux} we have that for sufficiently large $k$, $\beta > 0$. Assume w.l.o.g~that the selected $k$ satisfies this.

Invoking Property 3 of $Q$,
let $n$ be large enough such that \beqn \label{eq:prop3}
\sup_{\alpha \in [0,1],m \leq M_k} Q(n, \a, m, \delta_n)-\a \leq \epsilon,
\eeqn
where $M_k<\infty$ is the maximal size of a $\seq k$-net on $\X$ (such an $M_k$ exists by \lemref{gnet_finite_size}) and such that with a probability at least $1-\delta$, the following conditions all hold:
\begin{enumerate}
\item $X_n$ satisfies \eqref{cond}\label{cond1} with $X_n = B$.
\item The empirical majority vote of $S_n$ in each $V \in \vk$ is equal to the true majority vote in $V$.\label{cond2}
\item The following holds: \label{cond3}
\beq
\sum_{V \in \vk} \big|\frac{|V \cap C \cap X_n|}{|X_n|} - \mu(V \cap C)\big| +
\big|\frac{|V \setminus C \cap X_n|}{|X_n|} - \mu(V \setminus C)\big| \leq \epsilon.
\eeq 
\item $\err(h_{S'_n}) \leq Q(n, \serr(h_{S'_n},S_n), 2|S'_n|, \delta_n)$. \label{cond4}
\end{enumerate}
Conditions \ref{cond2},\ref{cond3} can be satisfied for large enough $n$ because $\beta > 0$ and $\vk$ is finite. 
Condition \ref{cond4} can be satisfied because of Property 1 of $Q$, by setting $n$ such that $\delta_n \ll \delta$. 

Assume that the conditions above hold. Then by condition \ref{cond1}, all impure Voronoi cells in $X_n(\g_k)$ are in $\vk$. For pure cells, clearly both the true and the empirical majority votes are zero. For cells in $\vk$, by condition \ref{cond2}, the true majority vote and the empirical
majority vote are equal. Therefore, $h_{S_n(\g_k)} = h_{X_n(\g_k)}$. Moreover,
from condition \ref{cond3}, 
$\serr(h_{S_n(\seq k)},S_n) \leq \err(h_{S_n(\seq k)}) + \epsilon$.  
Therefore, 
\beqn \label{eq:snxn}
\serr(h_{S_n(\g_k)},S_n) \leq \err(h_{X_n(\g_k)}) +\epsilon.
\eeqn
Now, \algname\ selects 
\[
\seq n^* = \argmin_\g Q(n, \serr(h_{S_n(\g)},S_n), 2|S_n(\g)|, \delta_n),
\]
and sets $S'_n = S_n(\seq n^*)$. Thus we have
\begin{align*}
\err(h_{S'_n}) &\leq Q(n, \serr(h_{S_n(\seq n^*)},S_n), 2|S_n(\seq n^*)|, \delta_n)  \\
&\leq Q(n, \serr(h_{S_n(\g_k)},S_n), 2|S_n(\seq k)|, \delta_n) \\
&\leq \serr(h_{S_n(\g_k)}, S_n) + \epsilon.
\end{align*}
The first inequality follows from condition \ref{cond4}. 
In the last inequality we used \eqref{prop3}. Combining this with \eqref{snxn} gives that with a probability at least $1-\delta$, 
\[
\err(h_{S'_n}) \leq \err(h_{X_n(\g_k)}) +2\epsilon.
\]
Moreover, since \eqref{cond} holds, $X_n(\g_k)$ is a $\g_k$-net of $\X$. Therefore $\err(h_{X_n(\g_k)}) \leq \epsilon$.
It follows that with a probability at least $1-\delta$, for all large enough $n$,
\[
\P[\err(h_{S'_n}) \leq 3\epsilon] \geq 1-\delta.
\]
Substituting $3\epsilon$ by $\epsilon$, this proves weak consistency of \algname\ on \exref{preiss}.
\end{proof}

\bibliographystyle{plain}
\bibliography{consist_nips2017}

\end{document}